\PassOptionsToPackage{dvipsnames}{xcolor}
\RequirePackage[l2tabu, orthodox]{nag}
\documentclass[12pt]{article}
\usepackage{booktabs} 

\usepackage{amsmath,amsfonts,bm}

\def\eqref#1{equation~\ref{#1}}

\def\1{\bm{1}}

\DeclareMathAlphabet{\mathsfit}{\encodingdefault}{\sfdefault}{m}{sl}
\SetMathAlphabet{\mathsfit}{bold}{\encodingdefault}{\sfdefault}{bx}{n}

\DeclareMathOperator*{\argmin}{arg\,min}

\usepackage[affil-it]{authblk}
\usepackage{todonotes}
\usepackage[parfill]{parskip}
\usepackage[margin=1in]{geometry}
\usepackage[defaultlines=3,all]{nowidow}

\usepackage[T1]{fontenc}
\usepackage{microtype}
\usepackage[english]{babel}
\usepackage{libertine}             
\usepackage{titlesec}
\titleformat{\section}[hang]{\Large\bfseries}{\thesection}{1em}{}
\usepackage{amsmath}
\usepackage{amssymb}
\usepackage{amsthm}
\usepackage{bbm}
\usepackage{mathtools}

\def\11{{\mathbf 1}}    

\def\11{{\mathbf 1}}    

\usepackage{adjustbox}

\usepackage{comment}             
\usepackage{wrapfig}             
\usepackage{caption}             
\usepackage{pifont}              
\usepackage{minitoc}             
\usepackage{algorithm}           
\usepackage[noend]{algpseudocode}
\usepackage{nicefrac}            
\usepackage{booktabs}            
\usepackage{tikz}
\usetikzlibrary{tikzmark}        
\usepackage{stfloats}
\usepackage{bm}
\usepackage[inline]{enumitem}
\usepackage{multirow}
\usepackage[normalem]{ulem}
\usepackage[toc,page,header]{appendix}
\usepackage{minitoc}
\usepackage{centernot}
\usepackage{enumitem}
\usepackage{hyperref}
\usepackage{cleveref}
\usepackage{subcaption}

\usepackage[dvipsnames,svgnames]{xcolor}

\usepackage[sort]{natbib}
\usepackage{bibunits}

\hypersetup{colorlinks,linktoc=all}
\usepackage[all]{hypcap}
\hypersetup{citecolor=Violet}
\hypersetup{linkcolor=black}
\hypersetup{urlcolor=MidnightBlue}

\usepackage{fancyhdr}

\theoremstyle{plain}
\newtheorem{theorem}{Theorem}[section]
\newtheorem{proposition}[theorem]{Proposition}
\newtheorem{lemma}[theorem]{Lemma}

\theoremstyle{definition}
\newtheorem{definition}[theorem]{Definition}
\newtheorem{assumption}[theorem]{Assumption}
\theoremstyle{remark}

\fancypagestyle{FirstPage}{
\lfoot{\textcolor{DarkBlue}{\copyright2025 Rational Intelligence Lab at CISPA. All Rights Reserved}}
}

\title{Incentive Aware AI Regulations: \\ A Credal Characterisation}

\author[1]{Anurag Singh\footnote{Corresponding author: anurag.singh@cispa.de}}
\author[1,2]{Julian Rodemann}
\author[3]{Rajeev Verma}
\author[4]{Siu Lun Chau}
\author[1]{Krikamol Muandet}
\affil[1]{\small{Rational Intelligence Lab, CISPA Helmholtz Center for Information Security, Saarbrücken, Germany}}
\affil[2]{\small{Department of Statistics, LMU Munich, Germany}}
\affil[3]{\small{UvA-Bosch Delta Lab, University of Amsterdam, Netherlands}}
\affil[4]{\small{Nanyang Technological University, Singapore}}
\date{\today} 

\begin{document}

\maketitle

\begin{bibunit}[alp]

\begin{abstract}
    The rapid proliferation of AI applications has intensified debate on effective regulation of these black-box services. Effective regulation must balance two competing goals: (1) deterring non-compliant providers from entering the market, while (2) retaining compliant ones. We call this ideal the \emph{perfect market outcome} (PMO). Regulators face two compounding obstacles that make PMO difficult to achieve: providers hold private information and can act strategically to evade compliance, while any evidence drawn or derived from a finite sample carries statistical uncertainty in proving non-compliance. As this information asymmetry and statistical uncertainty is inherent to any effective regulation, we formalise them through a \emph{mechanism design framework that explicitly accounts for such statistical uncertainty}. This yields a sharp characterisation: a mechanism achieves PMO if and only if the set of non-compliant evidence distributions forms a closed, convex set of probability measures, known in imprecise probability as a \emph{credal set}. This result serves as a diagnostic tool to determine whether PMO is achievable under a given regulation. We further show that PMO-achieving mechanisms can be constructed from a collection of hypothesis tests, and validate our theoretical contributions through experiments on spurious-feature and fairness-based regulations.
\end{abstract}

Keywords: AI-Regulation, Mechanism Design, Imprecise Probability, Testing by Betting

\doparttoc 
\faketableofcontents 

\section{Introduction}
\label{sec:intro}

As machine learning systems are increasingly deployed in high-stakes domains, ranging from credit scoring~\citep{baesens2003benchmarking} to social justice~\citep{angwin2022machine}, their associated risks can no longer be overlooked~\citep{buolamwini2018gender,laux2024trustworthy}. Policymakers have responded to these risks by developing AI governance frameworks, such as the EU AI Act~\citep{edwards2021eu}. While stricter regulations may deter non-compliance, it also risks triggering market collapse~\citep{goodman2017european}. Effective regulations must therefore balance two competing goals: (1) preventing non-compliant model providers, e.g., those failing fairness or robustness standards, from entering the market, while simultaneously (2) retaining compliant providers within the market. We refer to this situation as the \emph{perfect market outcome} (PMO), in which regulators successfully deter non-compliance without excessively rejecting legitimate entrants. 

Achieving PMO is challenging in practice because AI regulators operate under \emph{information asymmetry}: providers typically retain private information about their systems such as the choice of datasets, model architectures, weights, or training procedures~\citep{kolt2024responsible}. Since the burden of proving non-compliance rests with the regulator, this information asymmetry gives providers considerable room to evade compliance requirements \citep{casper2024black,li2025making}, which regulators may attempt to counter by demanding full access to model weights, gradients, training procedures, and hyper-parameters~\citep{shevlane2022structured,solaiman2023gradient}. However, proprietary interests and trade-secret protections often preclude the disclosure of such private information~\citep{pasquale2015black,raji2020closing}. As a result, policymakers have  argued that practical AI regulation must infer compliance from black-box access alone~\citep{brundage2018malicious}. This compounds information asymmetry with \emph{statistical uncertainty}: without direct access to the model, compliance can only be inferred from its behaviour on a finite evaluation dataset sampled from an underlying population~\citep{ren2024safetywashing,jansen2024statistical}. Each draw yields a different sample of evidence, so any compliance verdict rests on a noisy proxy for the population-level truth: a model judged non-compliant on one sample may well pass on another. To overcome these challenges, policymakers have advocated for  black-box regulatory frameworks that shift the burden of proof onto model  providers~\citep{hadfield2023regulatory,bova2023both}. 

\begin{figure}
    \centering
    \includegraphics[width=\linewidth]{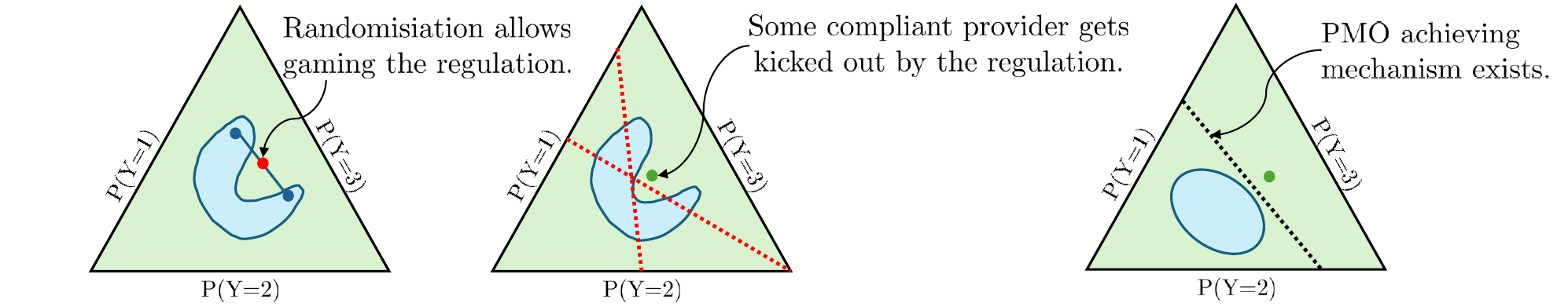}
    \caption{An illustration of our main result, Theorem \ref{theorem:obedience-to-credal}, for a classification task with $K=3$ classes. The blue regions represent the set of non-compliant evidence distributions within the probability simplex and shows how a mechanism can fail when the regulatory requirement results in the set of non-compliant evidence distributions to be non-credal. \textbf{Left:} a non-compliant provider bypasses the regulation by creating a compliant mixture (red dot) from two non-compliant models (dark blue dots). \textbf{Middle:} mechanism kicks out some compliant provider as they cannot be separated from non-compliant providers by a linear functional (dotted red line). \textbf{Right:} when the set of non-compliant evidence distributions is a credal set, a PMO achieving regulation exists.}
    \label{fig:geometry_of_regulation}
\end{figure}

This naturally casts AI regulation as a \emph{mechanism design problem under uncertainty}. Any effective burden-shifting regulation must counter the strategic use of information asymmetry, a classical concern of mechanism design ~\citep{hurwicz1973design,maskin1999nash}. The ``under uncertainty'' qualifier extends the framework to designers who must act on noisy, finite-sample evidence rather than direct observation of provider types. We ask the question, \emph{when can such black-box regulation mechanisms actually achieve their stated goal?} Our main result (Theorem~\ref{theorem:obedience-to-credal}, illustrated in Figure ~\ref{fig:geometry_of_regulation}) gives a sharp answer: \textbf{a PMO-achieving mechanism exists if and only if the set of non-compliant evidence distributions is closed and convex}, commonly known as a \emph{credal set} in the field of Imprecise Probability (IP)~\citep{walley1991}. In plain terms, the requirement must classify any randomised combination of non-compliant distributions as itself non-compliant. This condition serves as a diagnostic tool that tells regulators whether their regulatory requirements can be effectively enforced at all. Additionally, the credal set characterisation in Theorem~\ref{theorem:obedience-to-credal} allows us to leverage tools from IP to repurpose collection of hypothesis tests into PMO-achieving mechanisms.

\textbf{Our Contributions.} We formalise PMO as the design objective of a mechanism design problem under uncertainty, and use this formulation to derive a full characterisation of when an AI regulation can be implemented by a mechanism that achieves PMO in terms of credal sets. Specialising to threshold-based regulations, we further show that quasi-convexity and lower semicontinuity of the threshold function are the characterising conditions. Together, these results give regulators a principled way to check, whether a proposed regulation can be enforced without sacrificing the market. We also show that PMO-achieving mechanisms can be constructed from a collection of hypothesis tests, and validate our theoretical findings with experiments on both synthetic and real-world datasets.

\section{Preliminaries}
This section introduces the notation, presents the problem formulation, and reviews the necessary background on imprecise probabilities and mechanism design.

\textbf{Notation and problem formulation:} We consider a setting with two players: a \emph{model provider} and a \emph{regulator}. The provider trains predictive models and deploys them as a service (e.g., via an API), while the regulator ensures that these services comply with the established regulations. In our supervised learning setup, the provider selects a model $f:\mathcal{X}\rightarrow\mathcal{Y}$ from a hypothesis class $\mathcal{H}$, mapping an input space $\mathcal{X}\subseteq \mathbb{R}^d$ to a target space $\mathcal{Y}$. Model performance is evaluated using a loss function $\ell:\mathcal{Y}\times\mathcal{Y}\rightarrow\mathbb{R}_{\geq 0}$, where $\ell(f(x),y)$ represents the error on a data point $(x,y)\in\mathcal{X}\times\mathcal{Y}$. Let $(\Omega,\mathcal{F})$ be a measurable space associated with a fixed but unknown data-generating process $P$. The corresponding random variables are $X:\Omega\rightarrow\mathcal{X}$ and $Y:\Omega\rightarrow\mathcal{Y}$, with realizations $X(\omega)=x$ and $Y(\omega)=y$. Finally, we denote the space of probability distributions on a given set $A$ by $\Delta(A)$.

\textbf{Evidence space for outcome-based regulation:} In our setting, regulators only have black-box access to model providers' services; internal properties such as model parameters are unobservable, rendering regulations based on them unenforceable. Hence, we define an evidence space $\mathcal{Z}$ with a corresponding random variable $Z:\Omega\rightarrow\mathcal{Z}$, which the regulator uses to specify and evaluate compliance. The evidence space $\mathcal{Z}$ accommodates diverse real-world settings, including $Z:=\ell(f(X(\omega)),Y(\omega))$ for a static model $f$ and $Z:=\ell(f_{\gamma(X(\omega))}(X(\omega)),Y(\omega))$ for services employing a dynamic model router $\gamma:\mathcal{X}\rightarrow\mathcal{H}$~\citep{jitkrittum2025universal}. While randomness in these examples originates purely from the data, it may also arise jointly from the data and a stochastic model, such that $Z=\ell(h(X),Y)$ where $h\in\Delta(\mathcal{H})$. Finally, we equip the space of probability measures $\Delta(\mathcal{Z})$ with the weak-* topology induced by $\mathcal{C}(\mathcal{Z})$ which denotes the space of bounded continuous functions on $\mathcal{Z}$.

\subsection{Imprecise Probabilities (IP) and Credal Sets}
Standard probability theory~\cite{kolmogorov1956foundations} assigns a unique number in $[0,1]$ to every event in $\mathcal{F}$. In contrast, \emph{imprecise probability} (IP) generalizes this framework to accommodate ambiguity, partial ignorance, or conflicting evidence by allowing a range of plausible probability distributions~\citep{walley1991, augustin_introduction_2014}. While classical probability represents uncertainty using a single (additive) probability distribution $P$, various IP models such as lower probabilities, possibility measure, and belief functions are instead characterised by sets of distributions, commonly referred to as \emph{credal} sets.
\begin{definition}[Credal Set] 
A \emph{credal set} $\mathcal{P}_0$ is a closed, convex set of probability measures.
\end{definition}
This extension is historically grounded in the subjective interpretation of probability~\citep{definetti1974theory}, which departs from frequentist views by interpreting probability as an agent's betting dispositions rather than observed frequencies. From a robust Bayesian perspective, a credal set represents the agent's uncertainty: the ``true'' or ``ideal'' data-generating distribution is assumed to lie within $\mathcal{P}_0$, although its exact identity remains unknown. Central to this interpretation is the concept of a \emph{gamble}. A gamble $g: \Omega \rightarrow \mathbb{R}$ is a bounded real-valued function interpreted as an uncertain reward whose payoff is $g(\omega)$ if the outcome $\omega \in \Omega$ occurs. The precise probability $P(A)$ for an event $A\in\mathcal{F}$ coincides with the \emph{fair price} at which an agent is willing to buy or sell the associated indicator gamble $\mathbf{1}_A$ defined by $\mathbf{1}_A(\omega)=1$ when $\omega\in A$ and $0$ otherwise. To accommodate imprecision, the IP literature replace a single fair price with bounds: the supremum acceptable buying price and infimum acceptable selling price of a gamble. The two are conjugate and coincide in the precise case, characterizing agent's beliefs via the set of risks they are willing to accept, known as the set of \emph{marginally desirable gambles}. See \citet{augustin_introduction_2014} for more details.
\begin{definition}[Set of Marginally Desirable Gambles]
A gamble $g$ is \emph{marginally desirable} with respect to a credal set $\mathcal{P}_0$ if the agent expects a non-negative gain in the worst-case scenario. A set of marginally desirable gambles with respect to $\mathcal{P}_0$ is formally defined as
\begin{align*}
\mathfrak{G}_{\geq 0,\mathcal{P}_0} := \left\{ g: \Omega \to \mathbb{R} \mid \inf_{P \in \mathcal{P}_0} \mathbb{E}_P[g] \geq 0 \right\}.
\end{align*}
\end{definition} 
We denote the set of \emph{marginally undesirable gambles} as $\mathfrak{G}_{\leq 0,\mathcal{P}_0}:=\left\{-g\mid g\in\mathfrak{G}_{\geq 0,\mathcal{P}_0}\right\}$ and the set of desirable gambles w.r.t all distributions in $\Delta(\Omega)$ as $\mathfrak{D}_{\geq 0}$. Gambles serve as a geometric dual to the credal set. In the context of regulation, $\mathcal{P}_0$ represents the uncertainty in the model space, while $\mathfrak{G}_{\leq 0}$ represents the gambles in the evidence space. This relationship is fundamental to understanding actuarial risk, where regulation can be framed as checking whether a specific financial position (gamble) is desirable (acceptable) under a set of plausible stress-test scenarios (the credal set).

\subsection{Mechanism Design}
\label{subsec:mechanism-design}
We briefly review the relevant mechanism design concepts needed to frame AI regulation as a mechanism design problem (see \citet{roughgarden2010algorithmic} or \citet{nisan2007introduction} for a textbook introduction). Mechanism design studies how a designer can implement a desired outcome when an agent holds private information. An agent's private information, typically referred to as \textit{type}, is denoted by $\theta \in \Theta$ where $\Theta$ is the type space. The desired outcome is described by a \emph{social choice function} 

\begin{definition}[Social Choice Function]
Let $\mathcal{O}$ be a space of outcomes (or allocations). A \textit{social choice function (SCF)} $f: \Theta \to \mathcal{O}$ maps the type of an agent $\theta\in\Theta$ to a desired outcome $f(\theta)\in \mathcal{O}$. 
\end{definition}

Since the true type of an agent is unobservable to the designer, they cannot directly apply $f$ to allocate an outcome. 
Also, when asked directly, the agent may strategically misreport it for a better outcome. To address this, the designer designs a \textit{mechanism} $\mathcal{M} = \langle S, g \rangle$ consisting of a strategy space $S$ and an outcome function $g:S \to \mathcal{O}$ that maps strategy profiles to outcomes. Since an agent is self interested, they choose their own best strategy $s^*\in S$. Typically, a designer seeks to design an \emph{implementable mechanism} $\mathcal{M}$.
\begin{definition}[Implementable Mechanism]
A mechanism $\mathcal{M}$ is said to \textit{implement} the SCF $f$ if $g(s^*)$ corresponds to $f(\theta)$ for all agents. 
\end{definition}
An implementable mechanism $\mathcal{M}$ ensures that, for every agent, the outcome associated with the best strategy, i.e., $g(s^*)$ coincides with the desired social outcome $f(\theta)$. A fundamental result in mechanism design is the Revelation Principle \citep{gibbard1973manipulation, myerson1979incentive}: if there exists an implementable mechanism for a SCF, then the SCF can also be implemented by a \textit{direct mechanism}, where the strategy space is the type space itself ($S = \Theta$). This principle allows us to restrict our attention to direct mechanisms without loss of generality. In the case of AI regulation, the designer corresponds to the regulator and the agents to model providers. We therefore restricted our review to single-agent mechanisms, as our formulation assumes that a provider’s outcome does not depend on the actions of other providers. More generally, mechanism design allows for settings in which the SCF and agents’ utilities could depend on strategies of other agents. In auctions \citep{vickrey1961counterspeculation}, for example, competition could be exploited to achieve desired outcome.

\section{Incentive Aware Regulation}

In the context of AI regulation, an agent's (model provider) type $\theta$ corresponds to an element of the space of evidence distributions i.e. $P\in\Delta(Z)$. The outcome space $\mathcal{O} = \{0, 1\}$ represents market participation ($1$ for participation, $0$ for self-exclusion). A regulation can then be formally defined as

\begin{definition}[Requirement]Let $\mathfrak{R}: \Delta(\mathcal{Z}) \rightarrow \{0,1\}$ be a requirement function. An evidence distribution $P \in \Delta(\mathcal{Z})$ satisfies the regulatory requirement if $\mathfrak{R}(P)=1$.
\label{def:e-safe_model}
\end{definition}
Given our focus on outcome-based regulation, Definition~\ref{def:e-safe_model} directly defines the requirements on the observable evidence. In many scenarios, they are defined by thresholding a quantifiable metric $r(P): \Delta(\mathcal{Z}) \rightarrow \mathbb{R}$ such as accuracy, fairness, or worst-case subgroup performance, i.e., $\mathfrak{R}_\tau(P) = \mathbbm{1}[r(P) > \tau]$ where $\tau$ is a pre-defined threshold. The requirement induces an SCF $f$  whose outcome for a model provider with type $P\in\Delta(\mathcal{Z})$ is $\mathfrak{R}(P)$. While in mechanism design types are typically scalar, in our setting, the type corresponds to the entire evidence-generating process $P\in\Delta(\mathcal{Z})$. This makes AI regulation a mechanism design under statistical uncertainty problem.


\subsection{Regulation Mechanisms}

We define a regulation mechanism $\Pi \subseteq \mathcal{C}(\mathcal{Z})$ as a set of non-negative continuous bounded functions. In practice, a possible interpretation of the mechanism $\Pi$ could be a set of \emph{licenses}. A license would map the evidence $z\in\mathcal{Z}$ to a maximum price $\pi(z)$ at which a provider can sell their service. We assume that $\|\pi\|_{\infty}\leq R$ for all $\pi \in \Pi$, which imposes a ``market cap'' on the possible pricing in the market. Let $C<R$ be the market entry fee. 
The mechanism $\Pi$ enforces the requirement $\mathfrak{R}$ by implementing the perfect market outcome. Next, we define the notion of \emph{obedience}.
\begin{definition}[Obedience]
\label{def:obidience_to_regulation}
A regulation mechanism $\Pi$ is said to enforce \emph{obedience} to the requirement $\mathfrak{R}$ if the following holds true ex-ante for the agents: For all $P\in\Delta(\mathcal{Z})$ where $\mathfrak{R}(P)=0$,
\begin{equation}
    \sup_{\pi\in\Pi}\,\mathbb{E}_{Z\sim P}[\pi(Z)]\leq C.
\end{equation}
\end{definition}
Obedience ensures that the non-compliant providers cannot recover their entry fee from any license in $\Pi$\footnote{Another example of $\Pi$ is a contract that penalizes providers based on evidence. Then $R$ would be the maximum penalty and $C$ an initial credit tied to the fee. Definition~\ref{def:obidience_to_regulation} would then flip accordingly: for any $\pi \in \Pi$, the expected penalty for a non-compliant provider exceeds $C$, i.e., $\inf_{\pi \in \Pi} \mathbb{E}_P[\pi] \ge C$. We focus on the license interpretation of $\Pi$ for consistency with prior work on principal-agent hypothesis testing~\citep{bates2022principal,bates2023incentive,hossain2025strategic}.}, and therefore self-exclude. Furthermore, regulations must also be \emph{feasible}, i.e., 
\begin{definition}[Feasibility]
\label{def:feasibility-of-regulation} A regulation mechanism $\Pi$ is \emph{feasible} if for all $P\in\Delta(\mathcal{Z})$ such that $\mathfrak{R}(P)=1$, there exists a license $\pi \in \Pi$ for which $\mathbb{E}_{Z\sim P}[\pi(Z)]-C > 0$.
\end{definition}
Feasibility incentivises participation from the compliant providers. We emphasize that both obedience and feasibility guarantees operate at the level of incentives rather than direct enforcement: \emph{they do not physically restrict deployment or exclude providers, but instead induce compliance through expected outcomes}, implicitly assuming rational behaviour by providers. For example, obedience implicitly assumes that any rational non-compliant provider would not participate in a bet that has negative expected outcome. Next, we formalize the notion of perfect market outcomes in terms of model providers' ex ante decision to participate in the market.
\begin{definition}[Implementable Mechanism]
\label{def:implementable-regulation}
Let $\Pi$ be a regulation mechanism, $P\in\Delta(\mathcal{Z})$ a model provider's type and $G(\Pi,P)\in\{0,1\}$ their ex-ante decision to participate in the market. Then, $\Pi$ is said to \emph{implement} requirement $\mathfrak{R}$ if and only if $G(\Pi,P)=\mathfrak{R}(P)$ for all $P\in\Delta(\mathcal{Z})$.

\end{definition}
We call $\Pi$ that satisfies Definition~\ref{def:implementable-regulation} \emph{implementable}. When model providers are certain about their type $P$, their decision to participate in the market is given by $G(\Pi,P)=\mathbbm{1}[\sup_{\pi\in\Pi}\mathbb{E}_P[\pi(Z)]>C]$. Given the obedience to regulations, $G(\Pi,P)=0$ whenever $\mathfrak{R}(P)=0$ and based on the feasibility of regulations, $G(\Pi,P)=1$ whenever $\mathfrak{R}(P)=1$. Therefore, a mechanism $\Pi$ that satisfies both obedience to regulation and feasibility is also implementable. We now establishes a sufficient and necessary condition for any regulation requirement $\mathfrak{R}$ to be implementable.
\begin{theorem}
\label{theorem:obedience-to-credal}
An implementable regulation mechanism $\Pi$ for a requirement $\mathfrak{R}$ exists if and only if 
\begin{equation*}
   \mathcal{P}_0 := \{P \in \Delta(\mathcal{Z}) \mid \mathfrak{R}(P) = 0\}
\end{equation*}
is a credal set, i.e., a closed, convex set of probability measures. In the special case where the requirement is defined via thresholding rule, i.e., 
\(\mathfrak{R}(P) := \mathbbm{1}[r(P) > \tau]\), an implementable mechanism $\Pi$ exists for any threshold $\tau$ if and only if \(r\) is quasi-convex and lower semi-continuous.\footnote{A functional \(r\) is quasi-convex if all its sublevel sets are convex. Equivalently, for all \(P, Q \in \Delta(\mathcal{Z})\) and \(\lambda \in [0,1]\), $r(\lambda P + (1-\lambda) Q) \le \max\{r(P), r(Q)\}$.}
\end{theorem}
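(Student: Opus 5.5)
We prove the two directions of the credal characterisation separately and then specialise to thresholding rules. Throughout, write $\phi_\pi(P):=\mathbb{E}_P[\pi(Z)]$ for $\pi\in\Pi$. Each $\phi_\pi$ is an affine functional on $\Delta(\mathcal{Z})$, and since $\pi\in\mathcal{C}(\mathcal{Z})$ it is weak-* continuous.

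\emph{Necessity.} Suppose $\Pi$ is implementable. Then obedience and feasibility together give
\[
\mathcal{P}_0=\{P: \sup_{\pi\in\Pi}\phi_\pi(P)\le C\}=\bigcap_{\pi\in\Pi}\{P:\phi_\pi(P)\le C\}.
\]
Each set in the intersection is a closed half-space of $\Delta(\mathcal{Z})$: it is convex because $\phi_\pi$ is affine, and closed because $\phi_\pi$ is weak-* continuous. An arbitrary intersection of closed convex sets is closed and convex, so $\mathcal{P}_0$ is credal. Equivalently, $P\mapsto\sup_\pi\phi_\pi(P)$ is a supremum of continuous affine maps, hence convex and lower semicontinuous, and $\mathcal{P}_0$ is its $C$-sublevel set.

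\emph{Sufficiency.} This is the main step. Assume $\mathcal{P}_0$ is closed and convex. The plan is to separate each compliant $Q\notin\mathcal{P}_0$ from $\mathcal{P}_0$ and then rescale the separating function into a license.
\begin{enumerate}
\item Embed $\Delta(\mathcal{Z})$ in the locally convex space of finite signed measures with the weak-* topology induced by $\mathcal{C}(\mathcal{Z})$. Its continuous dual is $\mathcal{C}(\mathcal{Z})$.
\item For $Q\notin\mathcal{P}_0$, apply Hahn--Banach strict separation of the compact point $\{Q\}$ from the closed convex set $\mathcal{P}_0$. This yields $h\in\mathcal{C}(\mathcal{Z})$ with $\sup_{P\in\mathcal{P}_0}\mathbb{E}_P[h]<\alpha<\mathbb{E}_Q[h]$. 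If $\mathcal{P}_0=\emptyset$, use the constant license $\pi\equiv R$ instead.
\item Since every measure involved is a probability measure, we may add constants to $h$ and multiply it by positive scalars without changing the separation. Shift $h$ by $-\inf h$ (finite, since $h$ is bounded) to make it non-negative, then apply the affine map $t\mapsto C+c(t-\alpha')$, truncated at $0$, choosing $\alpha'$ and $c>0$ so that the resulting $\pi_Q$ satisfies the following:
\[
0\le\pi_Q\le R,\qquad \sup_{\mathcal{P}_0}\phi_{\pi_Q}\le C<\phi_{\pi_Q}(Q).
\]
\item Set $\Pi=\{\pi_Q: Q\notin\mathcal{P}_0\}$. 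Obedience holds by construction, since every license satisfies $\phi_{\pi_Q}\le C$ on $\mathcal{P}_0$. Feasibility holds because each compliant $Q$ has its own license with $\phi_{\pi_Q}(Q)>C$.
\end{enumerate}

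The obstacle is the normalisation in step 3, with two points to check. First, truncation at $0$ (taking a positive part) is not affine and could break the bound on $\mathcal{P}_0$. To avoid truncation, map the range $[\inf h,\sup h]$ affinely onto $[0,R]$, which gives $\tilde h$ with $\sup_{\mathcal{P}_0}\mathbb{E}[\tilde h]<\tilde\alpha<\mathbb{E}_Q[\tilde h]$. Then use the convex combination $\pi=\lambda\tilde h+(1-\lambda)C'$ for a suitable constant $C'\in[0,R]$, or shift the threshold, so that $\tilde\alpha$ lands at $C$. This is possible because $0\le C<R$ leaves room on both sides of $C$ within $[0,R]$. Second, the separation theorem needs the weak-* topology on measures to be locally convex with dual $\mathcal{C}(\mathcal{Z})$. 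This holds by construction, since the topology is defined through pairing with $\mathcal{C}(\mathcal{Z})$, so we will invoke that fact rather than prove it.

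\emph{Threshold case.} Here $\mathcal{P}_0=\{P: r(P)\le\tau\}$ is exactly the $\tau$-sublevel set of $r$. By the first part, an implementable mechanism exists for every $\tau$ if and only if every sublevel set of $r$ is closed and convex. Convexity of all sublevel sets is the definition of quasi-convexity, and closedness of all sublevel sets is equivalent to lower semicontinuity of $r$.
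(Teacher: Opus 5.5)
Your proof is correct and follows the paper's route. Necessity comes from writing $\mathcal{P}_0$ as an intersection of weak-* closed half-spaces $\{P:\mathbb{E}_P[\pi]\le C\}$, sufficiency from Hahn--Banach separation of each $Q\notin\mathcal{P}_0$ followed by a positive affine rescaling of the separating function into a $[0,R]$-valued license with threshold at $C$, and the threshold case from sublevel sets. Your normalisation (map the range onto $[0,R]$, then mix with a constant) and your handling of $\mathcal{P}_0=\emptyset$ are slightly tidier than the paper's explicit choice of $\alpha$. However, like the paper's construction, your normalisation needs $C>0$ rather than $0\le C$: there is no room below $C$ when $C=0$, and for $C=0$ with $\mathcal{P}_0=\{\text{uniform}\}$ on a finite $\mathcal{Z}$ the statement itself fails.
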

\textbf{Interpretation of credal set in Theorem \ref{theorem:obedience-to-credal}}. From a game-theoretic perspective, AI regulation can be viewed as a game between the regulator (or forecaster) and the model provider (or skeptic). The credal set induced by the specified requirements characterises the conditions under which the regulator cannot be exploited (Dutch booked;~\citealt{definetti1974theory,walley1991}). In classical IP, a forecaster issues predictions alongside gambles, which a skeptic can combine to induce sure loss, thereby exposing the forecaster's internal inconsistency. In our setting, the regulator specifies desired social outcomes through requirements and implements a regulatory mechanism; model providers can then strategically combine evidence distributions to circumvent the regulation, revealing inconsistencies in the regulator's design. In Figure~\ref{fig:geometry_of_regulation}, we show that if $\mathcal{P}_0$ were not convex, a provider with non-compliant models $g$ and $h$ (where $P_{g}, P_{h} \in \mathcal{P}_0$) could simply randomise between them to produce $P_\lambda = \lambda P_{g} + (1-\lambda) P_{h}$ that lies outside $\mathcal{P}_0$, allowing them to obtain a profitable license and bypass regulation without genuinely improving their underlying models. Conversely, if the regulator seeks to prohibit such behaviour, they would end up denying a license to a compliant provider (see the middle of Figure~\ref{fig:geometry_of_regulation}), thereby exposing an internal inconsistency in their position.  

\textbf{Implications of Theorem ~\ref{theorem:obedience-to-credal} for regulators.} Our result enables regulators to understand the market implications of their policies. Regulators often target multiple tasks simultaneously, establishing a checklist of criteria that providers must meet~\citep{lekadir2025future}. However, Theorem 3.5 indicates that even if regulators can guarantee perfect market outcomes for individual tasks, they may fail to do so when these tasks are considered jointly. To see why, consider the non-compliant set for a single task $i$, defined as $\mathcal{P}_0^i=\{P \mid r_i(P)\leq \tau_i\}$. A provider is non-compliant overall if they fail any single task; thus, the global non-compliant set is the union $\mathcal{P}_0=\cup_i\mathcal{P}_0^i$. Since it is a union of convex sets, $\mathcal{P}_0$ is typically non-convex. Many real-world regulatory metrics result in a non-convex $\mathcal{P}_0$, such as group DRO accuracy \citep{sagawa_distributionally_2020} or sub-group fairness \citep{williamson2019fairness}. When faced with non-convexity, regulators can adopt a conservative approach by regulating the convex hull of $\mathcal{P}_0$, i.e., $\text{co}(\mathcal{P}_0)$. While this ensures obedience, it sacrifices feasibility for a provider $Q$ when $Q \in \text{co}(\mathcal{P}_0) \setminus \mathcal{P}_0$. An alternative workaround is to regulate based on surrogate requirements whose $\mathcal{P}_0$ is a credal set—for example, using weighted group accuracy instead of Group-DRO accuracy. On a positive note, Theorem \ref{theorem:obedience-to-credal} permits direct thresholding of any moment of the evidence distribution.



\section{Towards Practical Regulation Mechanisms}

While Theorem~\ref{theorem:obedience-to-credal} provides a prescriptive characterisation for regulation design, it offers limited guidance on constructing implementable $\Pi$ that guarantee perfect market outcomes. To this end, this section provides examples of how to construct an implementable $\Pi$. We now define the set of all licenses that satisfy obedience (Definition~\ref{def:obidience_to_regulation}) as $\Pi^{\textnormal{ob}}_{\mathcal{P}_0}$ and show that:
\begin{lemma}
\label{lemma:all-obidient-or-nothing}
Let $\mathcal{P}_0$ be a credal set, then for an implementable menu $\Pi$ and a license $\pi$ such that $\{\pi\}$ satisfies obedience, $\Pi \cup \{\pi\}$ is also implementable. Consequently, the set of all obedient licenses $\Pi^{\textnormal{ob}}_{\mathcal{P}_0}$ is inherently the largest implementable mechanism.
\end{lemma}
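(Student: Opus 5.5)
The argument uses the observation made just before Theorem~\ref{theorem:obedience-to-credal}: a mechanism is implementable exactly when it is obedient (Definition~\ref{def:obidience_to_regulation}) and feasible (Definition~\ref{def:feasibility-of-regulation}). So I would check these two properties separately for $\Pi\cup\{\pi\}$, and then use the first part to handle $\Pi^{\textnormal{ob}}_{\mathcal{P}_0}$.

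\emph{Obedience of $\Pi\cup\{\pi\}$.} Fix $P\in\mathcal{P}_0$. The supremum over a union is the maximum of the two suprema, so
\[
\sup_{\pi'\in\Pi\cup\{\pi\}}\mathbb{E}_P[\pi'(Z)]=\max\Bigl\{\sup_{\pi'\in\Pi}\mathbb{E}_P[\pi'(Z)],\ \mathbb{E}_P[\pi(Z)]\Bigr\}.
\]
Both terms are at most $C$: the first because $\Pi$ is obedient, the second because $\{\pi\}$ is obedient.

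\emph{Feasibility of $\Pi\cup\{\pi\}$.} Fix $P$ with $\mathfrak{R}(P)=1$. Feasibility of $\Pi$ gives a license $\pi'\in\Pi\subseteq\Pi\cup\{\pi\}$ with $\mathbb{E}_P[\pi'(Z)]>C$, which is the required witness. Hence $\Pi\cup\{\pi\}$ is implementable.

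\emph{Maximality of $\Pi^{\textnormal{ob}}_{\mathcal{P}_0}$.} I read $\Pi^{\textnormal{ob}}_{\mathcal{P}_0}$ as the set of admissible licenses $\pi$ (non-negative, continuous, $\|\pi\|_\infty\le R$) such that $\{\pi\}$ is obedient. It has two properties to verify.

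\begin{itemize}
\item \emph{Every implementable mechanism is contained in it.} If $\Pi$ is implementable, it is obedient. Then each singleton $\{\pi\}$ with $\pi\in\Pi$ is obedient, since its supremum is dominated by the supremum over $\Pi$. So $\Pi\subseteq\Pi^{\textnormal{ob}}_{\mathcal{P}_0}$.
\item \emph{It is itself implementable.} Obedience holds because each $\mathbb{E}_P[\pi(Z)]\le C$ for $P\in\mathcal{P}_0$, so the supremum is also $\le C$. This uses nothing beyond taking a supremum of bounded numbers, so it is valid for arbitrary unions and not only finite ones. For feasibility, Theorem~\ref{theorem:obedience-to-credal} applies because $\mathcal{P}_0$ is credal, and gives some implementable $\Pi$. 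By the previous bullet $\Pi\subseteq\Pi^{\textnormal{ob}}_{\mathcal{P}_0}$, so the feasibility witnesses of $\Pi$ are also witnesses for $\Pi^{\textnormal{ob}}_{\mathcal{P}_0}$.
\end{itemize}

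The main obstacle is this final step, since feasibility of the whole obedient set does not follow from the single-license argument alone. It requires an existing implementable mechanism, which is exactly where the credal assumption on $\mathcal{P}_0$ enters, through the sufficiency direction of Theorem~\ref{theorem:obedience-to-credal}. The first part of the lemma holds for any $\mathcal{P}_0$ once an implementable $\Pi$ is given.
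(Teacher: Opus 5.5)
Your proposal is correct and is in substance the paper's proof. Adding an obedient license preserves obedience and keeps $\Pi$'s feasibility witnesses, every implementable mechanism lies inside $\Pi^{\textnormal{ob}}_{\mathcal{P}_0}$, and $\Pi^{\textnormal{ob}}_{\mathcal{P}_0}$ becomes feasible by containing an implementable mechanism obtained from Theorem~\ref{theorem:obedience-to-credal}. The paper phrases the maximality part as a proof by contradiction split into the cases $\Pi\not\subseteq\Pi^{\textnormal{ob}}_{\mathcal{P}_0}$ and $\Pi\subseteq\Pi^{\textnormal{ob}}_{\mathcal{P}_0}$, whereas you argue directly, but the ingredients are the same.
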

With Lemma~\ref{lemma:all-obidient-or-nothing} regulators can guarantee implementability simply by ensuring obedience. Regulators ask providers to commit to their model's compliance by proposing a license  $\pi\in\Pi^{\textnormal{ob}}_{\mathcal{P}_0}$. In doing so, the burden of ensuring feasibility is shifted on to the providers. To operationalise this, regulators must be able to verify whether a proposed $\pi$ belongs to $\Pi^{\textnormal{ob}}_{\mathcal{P}_0}$. To this end, we characterise $\Pi^{\textnormal{ob}}_{\mathcal{P}_0}$ below:
\begin{proposition}
\label{prop:characterisation-and-invariance-of-obedience}
     Let $\mathfrak{G}_{\leq 0,\mathcal{P}_0}$ be the set of marginally undesirable gambles w.r.t $\mathcal{P}_0$ and $\mathfrak{D}_{\geq 0}^R$ be the set of all desirable gambles with max payout of $R$, then $\Pi^{\textnormal{ob}}_{\mathcal{P}_0}= \{\mathfrak{G}_{\leq0,\mathcal{P}_0}+C\}\cap \mathfrak{D}_{\geq 0}^R$ where the set $\{\mathfrak{G}_{\leq0,\mathcal{P}_0}+C\}:= \{g+c \mid g\in\mathfrak{G}_{\leq 0,\mathcal{P}_0}\}$. Given such a characterisation, we can write $\Pi^\textnormal{ob}_{\mathcal{P}_0}$ as 
    \begin{equation}
    \label{eq:characterise-all-obedient-licenses}
        \Pi^{\textnormal{ob}}_{\mathcal{P}_0}:=\left\{\pi:\mathcal{Z}\rightarrow[0,R] \,\middle|\, \sup_{P\in\mathcal{P}_0}\mathbb{E}_P[\pi(z)]\leq C\right\}.
    \end{equation}
\end{proposition}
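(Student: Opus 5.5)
The plan is a direct double inclusion, obtained by unfolding Definition~\ref{def:obidience_to_regulation} together with the definitions of marginally (un)desirable gambles. First I record what membership in $\Pi^{\textnormal{ob}}_{\mathcal{P}_0}$ means. A license $\pi$ is a continuous function $\mathcal{Z}\to\mathbb{R}$ with $0\le \pi\le R$, and $\{\pi\}$ is obedient iff $\mathbb{E}_P[\pi]\le C$ for every $P$ with $\mathfrak{R}(P)=0$, i.e.\ for every $P\in\mathcal{P}_0$. Since $\{\pi\}$ is a singleton, this is the same as $\sup_{P\in\mathcal{P}_0}\mathbb{E}_P[\pi]\le C$. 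Together with the range constraint, this is exactly \eqref{eq:characterise-all-obedient-licenses}. So the second display is essentially a restatement, and the work lies in identifying it with $\{\mathfrak{G}_{\leq0,\mathcal{P}_0}+C\}\cap \mathfrak{D}_{\geq 0}^R$. If one reads $\Pi^{\textnormal{ob}}_{\mathcal{P}_0}$ as a menu whose every member is obedient, the same reduction applies license by license. The supremum over the menu is at most $C$ iff each member has expectation at most $C$ under every $P\in\mathcal{P}_0$.

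Next I unpack the gamble sets. By definition, $g\in\mathfrak{G}_{\leq 0,\mathcal{P}_0}$ iff $g=-h$ with $\inf_{P\in\mathcal{P}_0}\mathbb{E}_P[h]\ge 0$, which holds iff $\sup_{P\in\mathcal{P}_0}\mathbb{E}_P[g]\le 0$. Hence $\pi\in\{\mathfrak{G}_{\leq0,\mathcal{P}_0}+C\}$ iff $\pi-C\in\mathfrak{G}_{\leq0,\mathcal{P}_0}$. Using linearity of expectation and $\mathbb{E}_P[C]=C$ for probability measures, this holds iff $\sup_{P\in\mathcal{P}_0}\mathbb{E}_P[\pi]\le C$.

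For the second set, I read $\mathfrak{D}_{\geq 0}^R$ as the gambles with $\inf_{P\in\Delta(\mathcal{Z})}\mathbb{E}_P[g]\ge 0$ and $\|g\|_\infty\le R$. Evaluating at Dirac measures $\delta_z\in\Delta(\mathcal{Z})$ shows that desirability with respect to all distributions is equivalent to pointwise nonnegativity, $g(z)\ge 0$ for all $z$. The converse direction is immediate from monotonicity of the integral. So $\mathfrak{D}_{\geq 0}^R=\{g:\mathcal{Z}\to[0,R]\}$. Intersecting the two descriptions then yields \eqref{eq:characterise-all-obedient-licenses}, which completes both inclusions.

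The only delicate point is bookkeeping about the ambient function class. Licenses are required to be continuous and bounded, while gambles are merely bounded (measurable) functions. The paper moreover defines gambles on $\Omega$ rather than $\mathcal{Z}$. I would state that all gambles are taken on $\mathcal{Z}$ and restricted to $\mathcal{C}(\mathcal{Z})$, consistent with the weak-$*$ setup, so that the equality is an identity of subsets of $\mathcal{C}(\mathcal{Z})$. I would also note that if $\mathcal{P}_0$ is empty the supremum is $-\infty$, and both sides reduce to $\mathfrak{D}_{\geq 0}^R$. Closedness and convexity of $\mathcal{P}_0$ are not needed for this characterisation itself. They only matter, via Theorem~\ref{theorem:obedience-to-credal} and Lemma~\ref{lemma:all-obidient-or-nothing}, for this set to also be implementable.
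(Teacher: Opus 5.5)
Your proposal is correct and follows essentially the same route as the paper. Both arguments prove a double inclusion by unfolding obedience of a single license and the definitions of $\mathfrak{G}_{\leq 0,\mathcal{P}_0}$ and $\mathfrak{D}_{\geq 0}^R$. Your version additionally justifies $\mathfrak{D}_{\geq 0}^R=\{g:\mathcal{Z}\to[0,R]\}$ via Dirac measures and pins down the ambient function class (continuous licenses versus merely bounded gambles), two points the paper treats as definitional. You also obtain the second display directly rather than through the paper's contradiction argument.
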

Regulators can verify if the proposed $\pi$ belongs to $\Pi^{\text{ob}}_{\mathcal{P}_0}$ via Equation~\ref{eq:characterise-all-obedient-licenses}. Furthermore, $\Pi^{\text{ob}}_{\mathcal{P}_0}$ as defined in Equation~\ref{eq:characterise-all-obedient-licenses} satisfies Definition~\ref{def:obidience_to_regulation} and is closed, convex and invariant up to $\text{co}(\mathcal{P}_0)$; see Appendix~\ref{subsection:properties_of_obedient_regulations} for the proof. Characterising $\Pi^{\text{ob}}_{\mathcal{P}_0}$ via marginally undesirable gambles aligns with our interpretation of credal sets and the regulator in Theorem~\ref{theorem:obedience-to-credal}. By offering $\Pi^{\text{ob}}_{\mathcal{P}_0}$ as a menu of licenses, the regulator effectively acts as a forecaster, issuing undesirable gambles with respect to $\mathcal{P}_0$.

\textbf{Connection to hypothesis testing:} While $\Pi^{\text{ob}}_{\mathcal{P}_0}$ is implementable and is characterised by Equation~\ref{eq:characterise-all-obedient-licenses}, it is an infinite set. Thus, in practice, proposing a feasible $\pi \in \Pi^{\text{ob}}_{\mathcal{P}_0}$ is challenging for model providers. To address this issue, we establish a link between implementability and hypothesis testing. For any type $Q\in\Delta(\mathcal{Z})\setminus \mathcal{P}_0$, we can characterise the feasible $\pi \in \Pi^{\text{ob}}_{\mathcal{P}_0}$ as a continous function in $\mathcal{C}(\mathcal{Z})$ that satisfies the following feasibility, obedience, and market cap constraints:
\begin{equation}   
\mathbb{E}_{Q}[\pi(z)] > C, \quad \sup_{P\in\mathcal{P}_0}\mathbb{E}_{P}[\pi(z)]\leq C, \quad 0\leq \pi(z)\leq R.
\label{eq:optimisation-all-or-nothing-bets}
\end{equation}
By absorbing the market cap constraint into the definition of licenses and dividing by $R$, we can express the license in terms of a statistical test $\phi:\mathcal{Z}\to[0,1]$ as $\pi = R\cdot \phi$ such that the test $\phi$ satisfies unbiasedness and false positive control, i.e.,
\begin{align} 
\mathbb{E}_{Q}\left[\phi(z)\right]>C/R, \quad \sup_{P\in\mathcal{P}_0}\mathbb{E}_{P}\left[\phi(z)\right]\leq C/R .
\label{eq:credal-testing}
\end{align}
In other words, when $\mathcal{P}_0$ is a singleton $\{P\}$, a test that satisfies Equation~\ref{eq:credal-testing} is essentially an unbiased hypothesis test between $Q$ (alternate) and $P$ (null) with false positive rate of $\alpha=C/R$. A test $\phi$ is considered unbiased when $\alpha<1-\beta$ where $\beta$ is the type II error~\citep{lehmann2005testing}. Let $\phi_Q$ be an unbiased test between $Q$ and $P$. Then, an implementable mechanism can be explicitly constructed as $\Pi=\{R\cdot\phi_Q \mid \forall Q \neq P\}$. Since regulators construct $\pi$ with tests with the null hypothesis that the provider is non compliant, the mechanism build with these collection of hypothesis tests places the burden of proof onto model providers. Similar connections between contract theory and hypothesis testing with $\mathcal{P}_0=\{P\}$ have been drawn before in the context of moral hazard~\citep{saig2023delegated} and adverse selection~\citep{bates2022principal}. 

However, when $\mathcal{P}_0$ is not a singleton, the tests become an unbiased test between a simple alternate $Q$ and a composite null $\mathcal{P}_0$. In this case, Theorem~\ref{theorem:obedience-to-credal} implies that we can construct an implementable $\Pi$ if and only if $\mathcal{P}_0$ is a credal set. Thus, we can leverage the following test, which we denote as a credal test to build an implementable $\Pi$ for composite $\mathcal{P}_0$:
\begin{align}
    H_0:P\in\mathcal{P}_0 \quad\text{ vs }\quad H_1:P=Q \quad \text{ where } \mathcal{P}_0 \text{ is a credal set.}
\label{eq:credal test}
\end{align}
\begin{proposition}
\label{prop:credal-implementable-menu}
    Let $\mathcal{P}_0$ be a credal set and $\bm{\Phi}=\{\phi:\mathcal{Z}\rightarrow[0,1]\}$ be a set of credal tests. Suppose that, for every $\phi\in\bm{\Phi}$, $\alpha=C/R$ and for every $Q \notin \mathcal{P}_0$, there exists at least one unbiased test in $\phi\in\bm{\Phi}$. Then, $\Pi=\{R\cdot\phi \mid \phi\in\bm{\Phi}\}$ is implementable. 
\end{proposition}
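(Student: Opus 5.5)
We prove Proposition~\ref{prop:credal-implementable-menu} by checking obedience (Definition~\ref{def:obidience_to_regulation}) and feasibility (Definition~\ref{def:feasibility-of-regulation}) directly for $\Pi=\{R\cdot\phi\mid\phi\in\bm{\Phi}\}$. As noted after Definition~\ref{def:implementable-regulation}, these two properties together give implementability. First we confirm that $\Pi$ is an admissible mechanism. Each credal test $\phi$ maps into $[0,1]$, so $R\cdot\phi$ is non-negative and satisfies $\|R\phi\|_\infty\le R$, which respects the market cap. We also take the tests to be continuous, as in the construction around Equation~\ref{eq:optimisation-all-or-nothing-bets}, so that $\Pi\subseteq\mathcal{C}(\mathcal{Z})$.

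\textbf{Obedience.} Fix $\phi\in\bm{\Phi}$. Since every test in $\bm{\Phi}$ has level $\alpha=C/R$ for the composite null $\mathcal{P}_0$, we have $\sup_{P\in\mathcal{P}_0}\mathbb{E}_P[\phi]\le C/R$. Multiplying by $R$ gives $\sup_{P\in\mathcal{P}_0}\mathbb{E}_P[R\phi]\le C$, so $R\phi\in\Pi^{\textnormal{ob}}_{\mathcal{P}_0}$ by Proposition~\ref{prop:characterisation-and-invariance-of-obedience}. Taking the supremum over $\phi\in\bm{\Phi}$, for any $P$ with $\mathfrak{R}(P)=0$, that is $P\in\mathcal{P}_0$, we obtain $\sup_{\pi\in\Pi}\mathbb{E}_P[\pi]\le C$. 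Each inequality holds uniformly in $\phi$, so the supremum over $\Pi$ does not break the bound.

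\textbf{Feasibility.} Take $Q$ with $\mathfrak{R}(Q)=1$, that is $Q\notin\mathcal{P}_0$. By hypothesis there is an unbiased test $\phi_Q\in\bm{\Phi}$ for $H_0:P\in\mathcal{P}_0$ versus $H_1:P=Q$. Unbiasedness means its power exceeds its level, $1-\beta>\alpha=C/R$, so $\mathbb{E}_Q[\phi_Q]>C/R$ by Equation~\ref{eq:credal-testing}. Then $\pi=R\phi_Q\in\Pi$ satisfies $\mathbb{E}_Q[\pi]-C>0$, and the mechanism is feasible.

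\textbf{Conclusion and the delicate step.} Combining the two parts, $G(\Pi,P)=\mathbbm{1}[\sup_{\pi\in\Pi}\mathbb{E}_P[\pi]>C]$ equals $\mathfrak{R}(P)$ for every $P$, so $\Pi$ is implementable. The main point needing care is the meaning of ``unbiased'' for a composite null. We read it as $\mathbb{E}_Q[\phi]>\alpha\ge\sup_{P\in\mathcal{P}_0}\mathbb{E}_P[\phi]$, with strict inequality against the level itself and not merely against the worst-case null rejection rate; the strict inequality is what feasibility needs. The credal assumption on $\mathcal{P}_0$ is not used in the argument. It only guarantees, through Theorem~\ref{theorem:obedience-to-credal} and Hahn--Banach separation, that such a family $\bm{\Phi}$ can exist, so the hypotheses are not vacuous.
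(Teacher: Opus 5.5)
Your proof is correct and follows the paper's argument essentially verbatim. Obedience comes from multiplying the size bound $\sup_{P\in\mathcal{P}_0}\mathbb{E}_P[\phi]\le C/R$ by $R$, and feasibility from multiplying the strict unbiasedness inequality $\mathbb{E}_Q[\phi_Q]>C/R$ by $R$. Your side remarks are accurate and make explicit what the paper's proof leaves implicit: the continuity assumption so that $\Pi\subseteq\mathcal{C}(\mathcal{Z})$, the strict reading of ``unbiased'', and the observation that the credal hypothesis is never actually used.
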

 The rich literature in robust hypothesis testing~\citep{huber1965robust, huber1973minimax,augustin1998optimale,levy2008robust,schwaferts2019imprecise} can be used to build an implementable $\Pi$ in practice. However, unlike the singleton case, the Huber-Strassen test requires $\mathcal{P}_0$ to be 2-monotone~\citep{sundberg1992characterizations} which could restrict its applications.  

\textbf{Connection to sequential hypothesis testing:} While static hypothesis tests can be used to build an implementable $\Pi$, regulators often observe evidence over time, such as through continuous API monitoring. To this end, we show that recent advances in sequential hypothesis testing (See \citet{ramdas2024hypothesis}) can also be used to construct an implementable $\Pi$ as follows:
\begin{proposition}
\label{propositon:sequential-implementable-menu}
Let $\mathcal{P}_0$ be a compact credal set and $R\gtrsim 2.73C$, then $\Pi=\{C\cdot\phi_{\tilde{P}}\mid\forall \tilde{P}\notin\mathcal{P}_0\}$ is implementable. Where for an agent with type $Q$,
\begin{align}
    &\phi_{\tilde{P}}(z):= \min\Bigg\{\frac{\tilde{P}(z)}{P^*(z)},\frac{R}{C}\Bigg\}, \quad\text{and}\quad P^*=\argmin_{P\in\mathcal{P}_0} \;\text{KL}(Q \| P). 
\label{eq:sequential-licenses}
\end{align}
\end{proposition}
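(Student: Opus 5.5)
The plan is to verify the two conditions that, as noted after Definition~\ref{def:implementable-regulation}, together imply implementability: obedience (Definition~\ref{def:obidience_to_regulation}) and feasibility (Definition~\ref{def:feasibility-of-regulation}). Throughout I read the license $C\cdot\phi_{\tilde P}$ with $P^*$ the reverse information projection of its own alternative, $P^*_{\tilde P}=\argmin_{P\in\mathcal{P}_0}\text{KL}(\tilde P\|P)$. For an agent of type $Q$ this is exactly the license $\tilde P=Q$ singled out in \eqref{eq:sequential-licenses}. Compactness of $\mathcal{P}_0$ together with lower semicontinuity of $\text{KL}(\tilde P\|\cdot)$ in the weak-* topology gives existence of the minimiser; strict convexity of KL in its second argument gives uniqueness. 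The market-cap constraint holds by construction, since $0\le C\phi_{\tilde P}\le C\cdot R/C=R$.

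\textbf{Obedience.} I will use the standard property of the reverse information projection onto a convex set (Csiszár; Li; Grünwald et al.'s safe testing). If $\mathcal{P}_0$ is convex and $P^*$ minimises $\text{KL}(\tilde P\|\cdot)$ over $\mathcal{P}_0$ with finite value, then
\[
\mathbb{E}_P\bigl[\tilde P(Z)/P^*(Z)\bigr]\le 1 \qquad\text{for all } P\in\mathcal{P}_0 .
\]
I would prove this directly rather than cite it. Take the perturbation $P_\lambda=(1-\lambda)P^*+\lambda P\in\mathcal{P}_0$, which lies in $\mathcal{P}_0$ by convexity. Optimality gives $\frac{d}{d\lambda}\text{KL}(\tilde P\|P_\lambda)\big|_{\lambda=0^+}\ge 0$. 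This derivative equals $1-\mathbb{E}_P[\tilde P/P^*]$, where monotone convergence justifies exchanging the derivative and the expectation. Truncation only decreases the integrand, so
\[
\mathbb{E}_P[C\phi_{\tilde P}]\le C\,\mathbb{E}_P[\tilde P/P^*]\le C
\]
for every $P\in\mathcal{P}_0$ and every license. This gives $\sup_{\pi\in\Pi}\sup_{P\in\mathcal{P}_0}\mathbb{E}_P[\pi]\le C$.

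\textbf{Feasibility.} Fix $Q\notin\mathcal{P}_0$, take the license with $\tilde P=Q$, and write $X=Q/P^*$ and $M=R/C$. Then $\mathbb{E}_{P^*}[X]=1$ and
\[
\mathbb{E}_Q[\min\{X,M\}]=\mathbb{E}_{P^*}[h(X)], \qquad h(x)=x\min\{x,M\}.
\]
The key pointwise inequality is $h(x)\ge 2x-1$. For $x\le M$ it reads $h(x)-(2x-1)=(x-1)^2$. For $x>M$ it reads $h(x)-(2x-1)=(M-2)x+1>0$, which holds as soon as $M\ge 2$; this is guaranteed by the assumption $R\gtrsim 2.73C$. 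Hence
\[
\mathbb{E}_Q[\min\{X,M\}]\ge 2\,\mathbb{E}_{P^*}[X]-1=1 ,
\]
with equality only if $X=1$ $P^*$-a.s., that is, only if $Q=P^*\in\mathcal{P}_0$. Since $Q\notin\mathcal{P}_0$, the inequality is strict, so $\mathbb{E}_Q[C\phi_Q]>C$.

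\textbf{Main obstacles.} These are technical rather than conceptual.
\begin{itemize}
\item Both steps need $\text{KL}(Q\|P^*)<\infty$, so that $Q\ll P^*$ and the density ratio is defined. I expect to add this as a standing regularity assumption, for instance a common dominating measure with $\mathcal{P}_0$ containing a full-support element. If the minimal KL is infinite, I would instead fall back on a separating-hyperplane license from Theorem~\ref{theorem:obedience-to-credal}.
\item The licenses must lie in $\mathcal{C}(\mathcal{Z})$, which requires continuity of the density ratio, or an approximation argument using Lusin's theorem that preserves the strict feasibility inequality.
\item The rigorous interchange of derivative and expectation in the obedience step needs care.
\end{itemize}
I would handle the last point first, using monotonicity of the difference quotients of $\log$.
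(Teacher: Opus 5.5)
Your argument is correct, but it departs from the paper's proof in both halves.

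\textbf{Obedience.} The paper reads $P^*$ literally as the reverse information projection of the agent's \emph{true} type. For a non-compliant agent $P'\in\mathcal{P}_0$ this gives $P^*=P'$, and obedience collapses to $\mathbb{E}_{P'}[\min\{\tilde P/P',R/C\}]\le\min\{\mathbb{E}_{P'}[\tilde P/P'],R/C\}=1$. That step is immediate, but it has two costs. Every license then depends on the unobservable type. And convexity of $\mathcal{P}_0$ is never used: the same two steps would go through for any compact $\mathcal{P}_0$, which is in tension with the necessity half of Theorem~\ref{theorem:obedience-to-credal}.

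You instead take $P^*$ to be the projection of the license's own alternative $\tilde P$. This gives a fixed, type-independent menu, as the framework requires. Convexity then enters where it should, through the first-order condition that yields $\mathbb{E}_P[\tilde P/P^*]\le 1$ on $\mathcal{P}_0$. The price is that you must actually prove this RIPr/e-variable property under a finite-KL assumption. Your monotone-convergence plan handles it, since the difference quotients of $\log(1+\lambda(P/P^*-1))$ are bounded below once $\lambda\le 1/2$. Uniqueness of $P^*$ is not needed. Strict convexity only fixes $P^*$ $\tilde P$-a.s., but that is all the ratio depends on.

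\textbf{Feasibility.} The paper first shows $\mathbb{E}_Q[\log\min\{Q/P^*,a\}]>0$ from the pointwise bound $x\log\min\{x,a\}-x+1\ge 0$, which is where its condition $\ln(R/C)>1$ comes from. It then applies Jensen. Your quadratic minorant $x\min\{x,M\}\ge 2x-1$ bounds the expectation directly and avoids the logarithm. It needs only $R\ge 2C$. That constant cannot be lowered for the license $\tilde P=Q$: take $\mathcal{P}_0=\{P^*\}$ on two points with $P^*$ nearly degenerate.
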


The test$\phi_{\tilde{P}}$ effectively computes the likelihood ratio between agent's declared type $\tilde{P}$ and the true type $Q$ by searching for $P^*\in\mathcal{P}_0$. $P^*$ is the most similar distribution to $Q$ in $\mathcal{P}_0$ in terms of KL-divergence, also known as the Reverse Information Projection \citep{li1999estimation,csiszar2003information}. The likelihood ratio is truncated due to $\|\pi\|_{\infty}\leq R$. The regulator can now assign license based sequence of accumulated evidence $\{z_i\}_{i=1}^n$. to a provider with type $Q$ as $\pi_Q(z_{1:n})\;=\;\min\{C \prod_{i=1}^n \frac{Q(z_i)}{P^*(z_i)},\; R\}$.

So far, we assumed the regulator has an explicit form of $\mathcal{P}_0$. In practice, $\mathcal{P}_0$ is often defined implicitly (e.g., via fairness or risk constraints), making the likelihood ratio in Equation~\ref{eq:sequential-licenses} intractable. To enable the regulators to build an implementable $\Pi$ without explicit access to $\mathcal{P}_0$, we resort to the testing-by-betting framework \citep{shafer2021testing, grunwald2024safe} under the following assumption:
\begin{assumption}
    The requirement can be expressed as $\mathfrak{R}(P)=\mathbbm{1}[r(P)>\tau]$ for some $r(P):=\mathbb{E}_{P}[h(z)]$ such that  $h:\mathcal{Z}\rightarrow\mathbb{R}$ is the betting score or an unbiased estimator of $r$.
\label{assumption:linearisable_requirements}
\end{assumption}
Then, let us consider that regulator obtains finitely many i.i.d samples $\{z\}_{i=1}^n$ from the evidence generating distribution. Under Assumption \ref{assumption:linearisable_requirements}, the regulator can offer a mechanism $\Pi_n:=\{\pi_n[\lambda]\}_{\lambda}$ where $\pi_n[\lambda]:\mathcal{Z}^n\rightarrow[0,R]$ is the license based on $n$ samples:
\begin{equation}
    \pi_n[\lambda](z_{1:n}):=\min\Bigg\{C\prod_{i=1}^n\Big(1+\lambda_i(h(z_i)-\tau)\Big),R\Bigg\}
\label{eq:martingale-test}
\end{equation}
where $\lambda = (\lambda_n)_{n \geq 1}$ parametrises $\pi$ and allows providers to select a feasible $\pi_n\in\Pi_n$, subject to the constraint that $\lambda_n \in [0, B_n]$ for some $B_n\in\mathbb{R}$ such that $1 + \lambda_n (h(z_n)-\tau) \geq 0$ almost surely.
\begin{proposition}
    In cases where Assumption~\ref{assumption:linearisable_requirements} holds, there exists an $N\in\mathbb{N}$ such that for all  $n\geq N$, $\Pi_{n}:=\{\pi_n[\lambda]\}_{\lambda}$ is implementable.
\label{proposition:matringale-implementable}
\end{proposition}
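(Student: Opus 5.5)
The plan is to show that the statement in fact holds with $N=1$. We verify obedience (Definition~\ref{def:obidience_to_regulation}) and feasibility (Definition~\ref{def:feasibility-of-regulation}) directly for $\Pi_n$ at every fixed $n$. Implementability then follows from the remark after Definition~\ref{def:implementable-regulation}. Two structural facts do the work. First, the untruncated wealth process $M_n[\lambda]:=\prod_{i=1}^n(1+\lambda_i(h(z_i)-\tau))$ is nonnegative, because $\lambda_i\in[0,B_i]$ is chosen so that every factor is nonnegative. Second, by independence of the i.i.d.\ samples and Assumption~\ref{assumption:linearisable_requirements},
\begin{equation*}
\mathbb{E}_P\bigl[M_n[\lambda]\bigr]=\prod_{i=1}^n\bigl(1+\lambda_i(r(P)-\tau)\bigr).
\end{equation*}
We take the $\lambda_i$ to be predictable constants fixed by the provider ex ante, which is all the menu requires.

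\textbf{Obedience.} Let $\mathfrak{R}(P)=0$, i.e.\ $r(P)\le\tau$. Since $\min\{x,R\}\le x$, we get
\begin{equation*}
\mathbb{E}_P[\pi_n[\lambda]]\le C\,\mathbb{E}_P[M_n[\lambda]]=C\prod_i\bigl(1+\lambda_i(r(P)-\tau)\bigr).
\end{equation*}
Each factor is at least $0$, because it is the mean of a nonnegative variable, and at most $1$, because $\lambda_i\ge0$ and $r(P)-\tau\le0$. The bound is therefore at most $C$. It holds for every admissible $\lambda$ and every $n$, so taking the supremum over the menu preserves it.

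\textbf{Feasibility.} Let $r(Q)>\tau$. The only danger is the truncation at $R$. I would avoid it by taking $\lambda$ small, assuming first that $h$ is bounded, say $|h-\tau|\le M$; this boundedness is also what makes $B_i>0$ possible. Fix $n$ and choose a constant $\lambda_i\equiv\varepsilon>0$ with $\varepsilon M\le 1$ and $(1+\varepsilon M)^n\le R/C$. This is possible because $C<R$. Then $CM_n\le R$ surely, so the minimum is inactive, and
\begin{equation*}
\mathbb{E}_Q[\pi_n]=C\bigl(1+\varepsilon(r(Q)-\tau)\bigr)^n>C.
\end{equation*}
Crucially, $\varepsilon$ may depend on $Q$ and $n$, since the provider selects $\lambda$. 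No uniform margin in $r(Q)-\tau$ is needed, and this is exactly why a single $N$, in fact $N=1$, works for all compliant types simultaneously. Combining the two parts, $G(\Pi_n,P)=\mathfrak{R}(P)$ for all $P$ and all $n\ge1$.

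\textbf{Main obstacle.} The delicate step is feasibility when $h-\tau$ is bounded below but unbounded above, so that truncation cannot be excluded surely. Here I would argue by a first-order expansion in $\varepsilon$. Write $D(\varepsilon):=\mathbb{E}_Q[\min\{CM_n,R\}]-C$. Then $D(0)=0$, and on the event $\{CM_n\le R\}$, whose probability tends to $1$ as $\varepsilon\to0$, the integrand equals $CM_n$. I would then show that the right derivative satisfies $D'(0^+)=Cn(r(Q)-\tau)>0$. The derivative of the integrand is dominated by $C(1+\varepsilon(h-\tau))^{n-1}|h-\tau|$, which is integrable under a finite-moment assumption on $h$, so dominated convergence applies. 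The truncated part contributes only $o(\varepsilon)$. Hence $D(\varepsilon)>0$ for small $\varepsilon$. If this moment condition fails, I would restrict $\lambda$ to be supported on bounded evidence, by replacing $h$ with the clipped $h\wedge K$ for large $K$. This still gives $\mathbb{E}_Q[h\wedge K]>\tau$ by monotone convergence, and obedience is preserved, since clipping only decreases the mean and so keeps $r(P)\le\tau$ for non-compliant $P$.
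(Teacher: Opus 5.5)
Your argument is correct, and it differs from the paper's in the feasibility step. The paper fixes one Kelly stake $\lambda^*$ with $\mathbb{E}_Q[\log(1+\lambda^*(h(z)-\tau))]>0$. It uses the strong law of large numbers to get $W_n\to\infty$ $Q$-almost surely, hence $\pi_n\to R$, and then bounded convergence gives $\mathbb{E}_Q[\pi_n]\to R>C$, so $\mathbb{E}_Q[\pi_n]>C$ for $n\ge N$. That $N$ depends on $Q$. The paper never shows that a single $N$ works for all compliant $Q$ at once, which is what feasibility of $\Pi_n$ at a fixed $n$ requires, and nothing in the asymptotic argument controls $N_Q$ as $r(Q)\downarrow\tau$.

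Your small-stake construction lets $\varepsilon$ depend on $(Q,n)$, so the truncation is inactive and $\mathbb{E}_Q[\pi_n]=C(1+\varepsilon(r(Q)-\tau))^n$ exactly. This gives $N=1$ uniformly, so you prove the statement with the quantifiers in the stated order. What the paper's route buys is the stronger qualitative message that a compliant provider's license value approaches the cap $R$, which matches the experiments. Yours only secures a margin above $C$, but that is all feasibility asks.

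On obedience the two proofs agree in substance, but your product-of-means identity uses deterministic stakes. If the menu is read as containing predictable (adaptive) $\lambda_n$, as in the paper's proof and the fairness experiment, use the tower-property bound $\mathbb{E}_P[M_n]\le\mathbb{E}_P[M_{n-1}]\le\cdots\le 1$ instead.

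Finally, the bounded case is essentially the only one. Since $r$ is real-valued on all of $\Delta(\mathcal{Z})$, $h$ must be bounded above, and a nonzero $B_n$ forces $h-\tau$ to be bounded below. You can therefore drop the ``main obstacle'' paragraph. That is just as well, because its clipping fallback produces licenses built from $h\wedge K$, which are not elements of $\Pi_n$.
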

The Equation \ref{eq:martingale-test} used to construct a $\pi\in\Pi$ is commonly referred to as a test-martingale~\cite{ramdas2024hypothesis} in the testing by betting literature which implicitly tests against a credal set (See Appendix~\ref{app:martingale_background}). Test-martingales have been applied to monitor risk ~\citep{waudby2024estimating, timans2025continuous}, fairness~\citep{chugg2023auditing} and differential privacy~\citep{gonzalez2025sequentially} and they can be transformed into implementable mechanisms for cases where $\mathcal{P}_0$ is a credal set. 

\section{Experiments}

We empirically validate our theoretical contributions through three experiments: (1) \textbf{Strategic Gaming}: We demonstrate that regulators using a non-convex $\mathcal{P}_0$ are vulnerable to arbitrage by strategic agents (Fig~\ref{experiments:fig1a}); (2) \textbf{Example of }$\bm{\Pi}$ \textbf{using Sequential Hypothesis Testing}: We design licenses $\pi\in\Pi$ using Equation~\ref{eq:sequential-licenses} for regulators seeking to control the use of spurious features in classification under sequentially observed data (Figs~\ref{experiments:fig1b} \& \ref{experiments:fig1c}); and (3) \textbf{Regulation with Implicit $\mathcal{P}_0$}: We demonstrate a fairness regulation framework in which $\mathcal{P}_0$ is specified implicitly, eliminating the need for an explicit representation (Fig~\ref{experiments:fig1d}). 

\subsection{Datasets and Experiment Setup} 
\vspace{-0.5em}
In Figure~\ref{experiments:fig1a}, we define an outcome space with three prohibited distributions: $P_1=[0.35, 0.35, 0.3]$, $P_2=[0.35, 0.3, 0.35]$, and $P_3=[0.3, 0.35, 0.35]$. A model provider then samples uniformly from $\{P_1, P_2, P_3\}$, effectively generating the evidence distribution $Q=\sum_{i=1}^3 \frac{1}{3}P_i$. The strategic non-compliant provider attempts to bypass a ``naive'' regulator that grants licenses by testing against the discrete set $\{P_1, P_2, P_3\}$ using the generalised likelihood ratio $\min\big\{\frac{dQ(z)}{\max_i P_i(z)},R\big\}$, whereas the credal regulator grants licenses by testing against the entire credal set.

We use the Waterbirds dataset~\citep{sagawa_distributionally_2020}, a popular benchmark for learning under spurious correlations, where task is to classify birds as \emph{Landbirds} or \emph{Waterbirds}. The training data is heavily biased: $95\%$ of waterbirds appear on water backgrounds (spurious correlation). We compare two agents: (1) \textbf{A Non-compliant Agent}: An ERM model trained which often relies on spurious features to make prediction.
(2) \textbf{A Compliant Agent}: A model trained via Group-DRO, which is less susceptible to the spurious features. We use ResNet-50~\citep{he2016deep} to train both models. The regulator explicitly defines $\mathcal{P}_0$ via the convex hull of an ERM-trained model mixed with a random predictor. Essentially, $\mathcal{P}_0$ represents the mixture of distributions which rely on the spurious features, background information in the case of ERM and random noise in the case of random predictor.

In Figure~\ref{experiments:fig1d}, we consider a regulator who enforces a demographic parity: $|\mathbb{E}[Y \mid A=1] - \mathbb{E}[Y \mid A=0]| < \tau$ for prediction $Y\in\{0,1\}$, subgroups $A\in\{0,1\}$, and $\tau=0.6$.  We simulate providers with varying true fairness gaps $\Gamma \in \{0.4, 0.6\}$ by setting the prediction rates for the subgroups to fixed Bernoulli parameters as $Y_0 = \text{Bernoulli}(0.1)$ and $Y_1 = \text{Bernoulli}(\Gamma+0.1)$. Here, the regulator does not maintain a representation of ``unfair distributions''. Instead, they offer a license based on the statistic
$\pi_n = \prod_{t=1}^n (1 + \lambda_t (\tau-|Y_0-Y_1|))$ where $\lambda_t$ indicates an adaptive betting strategy~\citep{shekhar2023reducing}. See Appendix~\ref{appendix:experiment-details-fairness} for further experimental details.
\vspace{-0.5em}
\begin{figure*}[t]
    \begin{subfigure}{0.22\textwidth}
        \includegraphics[width=\linewidth]{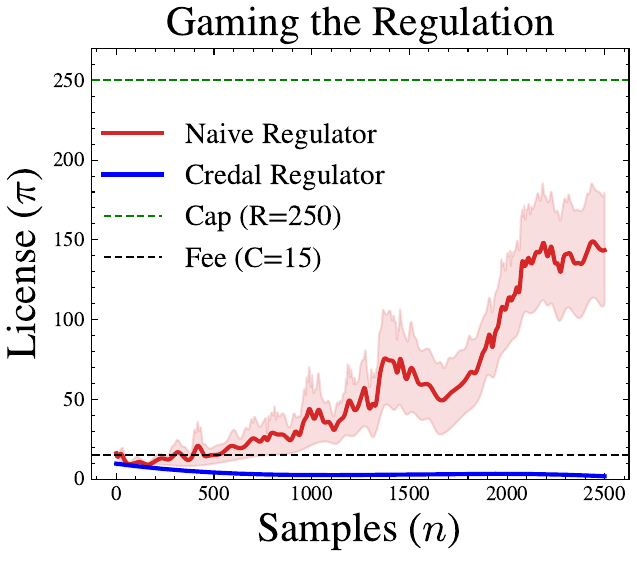}
        \caption{}
        \label{experiments:fig1a}
    \end{subfigure}
    \begin{subfigure}{0.22\textwidth}
        \includegraphics[width=\linewidth]{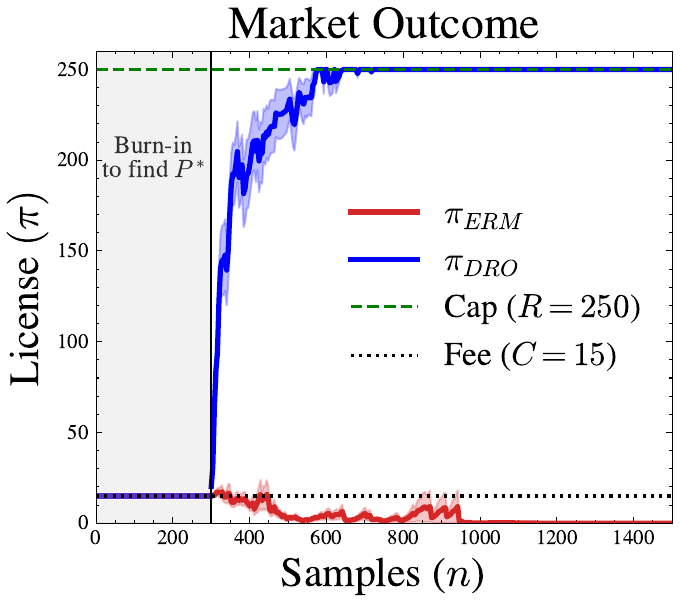}
        \caption{}
        \label{experiments:fig1b}
    \end{subfigure}
    \begin{subfigure}{0.22\textwidth}
        \includegraphics[width=\linewidth]{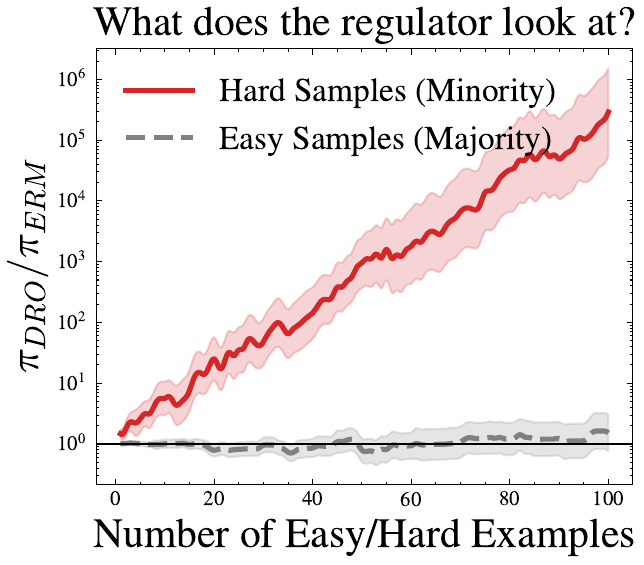}
        \caption{}
        \label{experiments:fig1c}
    \end{subfigure}
    \begin{subfigure}{0.325\textwidth}
        \includegraphics[width=\linewidth]{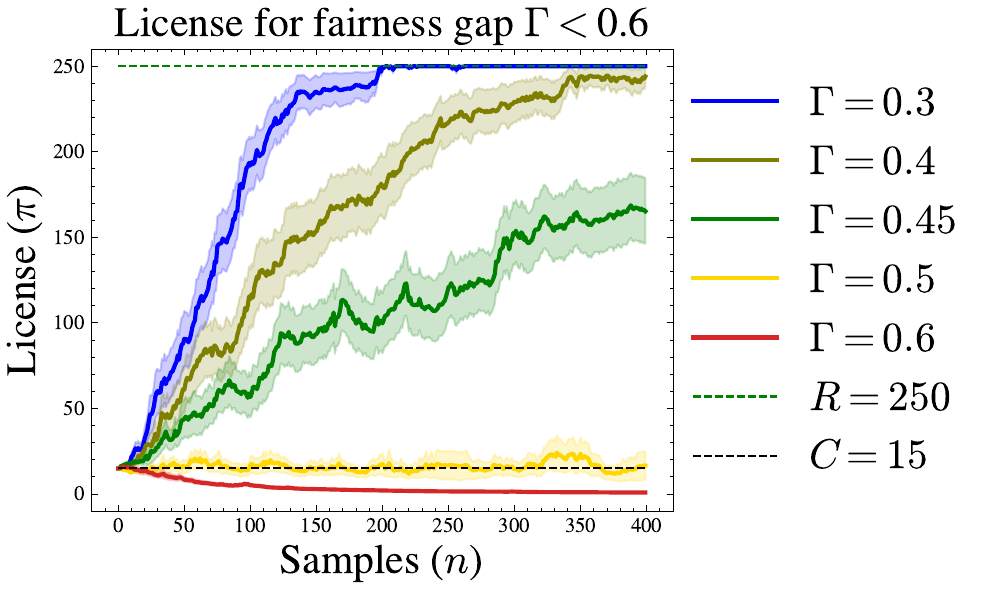}
        \caption{}
        \label{experiments:fig1d}
    \end{subfigure}
    \caption{(\subref{experiments:fig1a}) A regulator with a non-convex $\mathcal{P}_0$ can be exploited by strategic providers through mixtures of evidence generated by poor models. (\subref{experiments:fig1b}) Accumulated revenue generated by \(\pi\) for ERM (non-compliant) and Group-DRO (compliant) agents on Waterbirds as the sample size \(n\) increases. (\subref{experiments:fig1c}) The ratio \(\pi_{\mathrm{DRO}} / \pi_{\mathrm{ERM}}\) computed on 100 random test samples, separated into easy (majority) and hard (minority, counter-spurious) examples, showing that the Group-DRO agent receives a more favorable license due to its stronger performance on hard examples. (\subref{experiments:fig1d}) Practical fairness regulations based on implicit credal set. Results are averaged over 30 runs; shaded regions denote standard error.}
    \label{fig:waterbirds}
\end{figure*}
\subsection{Insights from Experiments}
\vspace{-0.5 em}
\textbf{Credal regulators limit strategic behaviour.} Figure~\ref{experiments:fig1a} highlights the vulnerability of non-convex regulation. The naive regulator (red) grants a license to the strategic provider because the mixture distribution $Q$ is statistically distinct from every individually prohibited evidence distribution $P_i$. In contrast, the credal regulator (green) correctly identifies that $Q$ lies within the convex hull of prohibited distributions and the provider self excludes. This demonstrates that, for a regulation to be robust against strategic behaviour, its set of prohibited distributions must be a credal set. 

\textbf{Regulation with explicit $\mathcal{P}_0$.} While thresholding directly on worst-case group accuracy would lead to a non-convex $\mathcal{P}_0$, regulators could explicitly construct a credal set $\mathcal{P}_0$ which works in practice. Figures~\ref{experiments:fig1b} and \ref{experiments:fig1c} demonstrate this for regulating the use of spurious features. In Figure~\ref{experiments:fig1b}, after a 300-sample burn-in to find $P^*$, the compliant agent's license grows to the cap $R=250$, while the non-compliant agent fails to obtain a license that can recover its fee $C=15$. Figure~\ref{experiments:fig1c} shows that the larger license value of the compliant agent is driven  by ``Hard Examples" (e.g., Waterbirds on Land), as on ``Easy Examples'' both agents agree with the regulator's baseline ($\pi_{\text{DRO}}/\pi_{\text{ERM}} \approx 1$).    

\textbf{Regulation with implicit $\mathcal{P}_0$.} 
Figure~\ref{experiments:fig1d} shows that regulators need not maintain an explicit credal set to issue licenses. Regulators can offer model providers the opportunity to bet on their model's fairness by selecting $\lambda$, thereby enabling implicit testing against the credal set of all non-compliant distributions. Since $|Y_0-Y_1|$ is not an unbiased estimator of the fairness gap $\Gamma$, borderline non-compliant providers ($\Gamma=0.5$) may self-exclude. Nevertheless, the surrogate estimator $|Y_0-Y_1|$ preserves obedience (see Appendix~\ref{appendix:experiment-details-fairness} for a proof). In practice, the regulation also does not significantly compromise feasibility, as compliant agents begin to participate from $\Gamma=0.45$ onward. 
\section{Related Work}

In this section, we contrast our work with existing research on principal-agent hypothesis testing, imprecise probability, testing-by-betting, and AI governance.

\textbf{Principal-agent problem and hypothesis testing.} Several recent works study relationship between the principal-agent problem and hypothesis testing~\citep{bates2022principal, bates2023incentive, min2023screening, hossain2025strategic}. In particular, \citet{bates2022principal, bates2023incentive, min2023screening} study adverse selection while focusing on moral hazard. \citet{hossain2025strategic} formulate strategic testing as a Bayesian games, whereas \citet{gauthier2026betting} extend it to testing for equilibrium. Conceptually, our work is most closely related to \citet{bates2022principal} as we also formulate core challenge in AI regulation as information asymmetry under uncertainty. However, our focus is on characterising perfect market outcomes, rather than mitigating strategic behaviour. See Appendix~\ref{appendix:hypothesis-testing} for further discussion.

\textbf{IP and testing by betting.} Our results characterises implementable regulation mechanisms, using tools from Imprecise Probability (IP)~\citep{walley1991,augustin_introduction_2014} and theory of desirability~\citep{de2012exchangeability,de2023theory}. IP offers rich literature on credal testing \citep{huber1973minimax,chau2024credal,jurgens2025calibration,chugg2026admissibility} which in light of our results can have potential applications in AI regulations. Closely related to IP testing literature is the testing-by-betting literature~\citep{shafer2021testing,vovk_game-theoretic_nodate,ramdas2023game,grunwald2024safe}.  Testing-by-betting has become the backbone of methods that perform auditing of ML models~\citep{shekhar2023reducing,xu2024online}, LLM providers~\citep{velasco2025auditing} and AI agents~\citep{sadhuka2025valuator} for different applications. Our results show that testing by betting methods can be used to build implementable regulation mechanisms. 

\textbf{AI governance and regulation.} The rapid AI adoption has significant societal consequences and has led to ongoing debates on AI governance \citep{dafoe2018ai,taddeo2018ai}. This debate has lead to question regulations from an ethical~\citep{jobin2019global,hagendorff2020ethics,huang2022overview}, policy~\citep{diakopoulos2016accountability,o2017weapons}, socio-cultural~\citep{awad2018moral,vesnic2020societal}, and political~\citep{pavel2023ai,schmid2025arms} perspectives. Our work contributes to this literature by providing insights and surfacing technical challenges in operationalising policymakers’ normative goals~\citep{kroll2015accountable,raji2020closing, kolt2024responsible, lekadir2025future}. We review these challenges further in Appendix~\ref{appendix-non-technical-challenges-for-regulation} and discuss the broader impact of our work in Appendix~\ref{app:broader-impact}.
\section{Discussion}
Many proposals have argued to shift the burden of proof onto providers to operationalise black-box regulations. For example, \citet{buhl2024safety} argued for providers to construct structured safety cases as evidence that their systems meet the regulator's standards. \citet{hadfield2023regulatory} and \cite{bova2023both} proposed establishing regulatory markets in which providers must purchase compliance services. \citet{cen2024transparency} deliberated burden of proof while designing hypothesis tests for AI regulation such that the tests can fulfil corresponding legal requirements. The shared logic across all these proposals is that providers know more about their models than regulators and should bear the risk of demonstrating compliance. Our main result, Theorem~\ref{theorem:obedience-to-credal}, characterises the cases in which these proposals could be operationalised. The main challenge moving forward lies in managing the trade-offs introduced by the impossibility of designing PMO-achieving mechanisms for non-credal $\mathcal{P}_0$. This involves both open policy questions and technical challenges concerning practical ways to circumvent the impossibility result. On the positive side, an immediate consequence of Theorem~\ref{theorem:obedience-to-credal} is that any non-credal $\mathcal{P}_0$ can be \emph{conservatively} regulated by its closed convex hull $\overline{\mathrm{co}}(\mathcal{P}_0)$. However, when Theorem~\ref{theorem:obedience-to-credal} permits PMO for a regulation, the regulators can design implementable mechanisms with great flexibility using collection of hypothesis tests for both static and sequential settings. The implementable mechanisms guarantee PMO while allowing regulators to bypass noisy, sample-based verification of compliance by shifting burden of proof on to providers.


\newpage
\addcontentsline{toc}{section}{Appendix} 
\part{Appendix} 
\parttoc 
\newpage
\section{Proof of Theorem ~\ref{theorem:obedience-to-credal}}
In order to prove our main theorem we define some additional auxilary concepts below. In general the main idea of the proof is that we look at the set of distributions that are regulated by any regulation mechanism $\Pi$ and then ask under what conditions are the regulated distributions exactly the distributions that we want to regulate i.e. $\mathcal{P}_0=\left\{P\in\Delta(\mathcal{Z}\middle|\mathfrak{R}(P)=0\right\}$. To this end we define the set of regulated distributions. 
\begin{definition}
We define a set of regulated distributions $\mathcal{P}_{reg}(\Pi)$ as the set of distributions that are regulated by a mechanism $\Pi\subseteq \mathcal{C}(\mathcal{Z})$. Formally, this means that
\begin{align*}
    P\in\mathcal{P}_{reg}(\Pi)\quad \text{iff} \quad \sup_{\pi\in\Pi} \mathbb{E}_{P}[\pi(Z)]\leq C
\end{align*}
\end{definition}
Careful readers can note that the condition in the definition of regulated distributions is similar to Definition~\ref{def:obidience_to_regulation}. However, in general $\mathcal{P}_{reg}(\Pi)$ may not be similar to $\mathcal{P}_0$. Also, $\mathcal{P}_{reg}(\Pi)$ is convex and closed for any $\Pi\subseteq \mathcal{C}(\mathcal{Z})$, i.e. $\mathcal{P}_{reg}(\Pi)$ is always a credal set. Verifying this is actually quite straightforward and we include this for completeness of the proof. Let us consider an arbitrary $\Pi$, then 
\paragraph{Claim 1: (\mbox{$\mathcal{P}_{reg}(\Pi)$} is a convex set)}
Let us assume that there exist two arbitrary distributions such that $P_1,P_2\in\mathcal{P}_{reg}(\Pi)$. And for $\alpha\in[0,1]$ there exists a linear combination $P_{\alpha}:=\alpha P_1+(1-\alpha)P_2$ of $P_1$ and $P_2$. Since $P_1,P_2\in\mathcal{P}_{reg}(\Pi)$, 
\begin{align*}
    \sup_{\pi\in\Pi}\mathbb{E}_{Z \sim P_1}[\pi(Z)]&\leq C \quad\text{ and } \quad \sup_{\pi\in\Pi}\mathbb{E}_{Z \sim P_2}[\pi(Z)]\leq C
\end{align*}
Then, 
\begin{align*}
    \forall \alpha \in [0,1] \quad \sup_{\pi\in\Pi} \mathbb{E}_{Z\sim P_\alpha}[\pi(Z)]&\leq \alpha \sup_{\pi\in\Pi}\mathbb{E}_{Z \sim P_1}[\pi(Z)] + (1-\alpha)\sup_{\pi\in\Pi}\mathbb{E}_{Z \sim P_2}[\pi(Z)] \quad \tag*{(\mbox{$P\rightarrow \sup_{\pi\in\Pi}\mathbb{E}_{P}[\pi(Z)]$} is convex in \mbox{$P$})}\\
    &\leq \alpha C + (1-\alpha)C  \tag*{(\mbox{$P_1,P_2\in\mathcal{P}_{reg}(\Pi)$})}\\
    &= C
\end{align*}
Since $\sup_{\pi\in\Pi} \mathbb{E}_{Z\sim P_\alpha}[\pi(Z)]\leq C$, $P_{\alpha}\in\mathcal{P}_{reg}(\Pi)$. This holds for all $\alpha\in[0,1]$ for arbitrary choices of $P_1$ and $P_2$ in $\mathcal{P}_{reg}(\Pi)$. Therefore, $\mathcal{P}_{reg}(\Pi)$ is a convex set for any $\Pi$. 
\paragraph{Claim 2: (\mbox{$\mathcal{P}_{reg}(\Pi)$} is a closed set)} We consider weak-* topology on our space of probability measures $\Delta(\mathcal{Z})$. We know that $\Pi\subseteq \mathcal{C}(\mathcal{Z})$ is a set of bounded, non-negative, continuous functions, i.e. $0\leq\pi(z)\leq R$ for all $z\in\mathcal{Z}$ for all $\pi\in\Pi$. We also know that $C \in \mathbb{R}_{\geq 0}$ is a constant. And we defined the set $\mathcal{P}_{reg} \subseteq \Delta(\mathcal{Z})$ as:

$$\mathcal{P}_{reg}(\Pi) = \left\{ P \in \Delta(\mathcal{Z}) \;\middle|\; \sup_{\pi \in \Pi} \mathbb{E}_P[\pi(Z)] \leq C \right\}$$

Therefore, the condition $\sup_{\pi \in \Pi} \mathbb{E}_P[\pi(Z)] \leq C$ is satisfied if and only if $\mathbb{E}_P[\pi(Z)] \leq C$ holds for every individual $\pi \in \Pi$. We can rewrite the set $\mathcal{P}_{reg}(\Pi)$ as an intersection of sets defined by single constraints:

\begin{equation*}
    \mathcal{P}_{reg}(\Pi) = \bigcap_{\pi \in \Pi} \left\{ P \in \Delta(\mathcal{Z}) \;\middle|\; \mathbb{E}_P[\pi(Z)] \leq C \right\}
\end{equation*}
Let us define the constituent sets as $A_\pi$:
\begin{equation*}
A_\pi := \left\{ P \in \Delta(\mathcal{Z}) \;\middle|\; \mathbb{E}_P[\pi(Z)] \leq C \right\}    
\end{equation*}
Now, $\mathcal{P}_{reg}(\Pi) = \bigcap_{\pi \in \Pi} A_\pi$, and since arbitrary intersection of closed sets is closed, in order to prove that $\mathcal{P}_{reg}(\Pi)$ is closed we need to show that all $A_\pi$ are closed sets. Let's choose an arbitrary $\pi \in \Pi$ and since $A_{\pi}$ is a half space in the space of distributions $\Delta(\mathcal{Z})$, equipped with weak-* topology, we need to show half spaces of probability measures are closed in the weak-* topology. To this end we state the notion of convergence of a sequence of distributions $\{P_n\}_{n=1}^\infty$ to its limit $P$ in the weak-* topology. 
\begin{definition}(Convergence of distributions in weak-* topology) We say that a sequence of distributions $\{P_n\}_{n=1}^\infty$ converges to $P$ in the weak-* topology if 
\begin{equation*}
    \mathbb{E}_{P_n}[f]\rightarrow \mathbb{E}_{P}[f]\quad \forall\text{ }f\in \mathcal{C}(\mathcal{Z})
\end{equation*}
\label{definition:weak-star-convergence}
\end{definition}
Within our arbitrary $A_\pi$, let us now consider a sequence of $\{P_n\}_{n=1}^\infty\in A_\pi$ such that $\{P_n\}_{n=1}^\infty\to P$. This implies convergence of the expectations $E_{P_n}[\pi] \to E_P[\pi]$ as $\pi\in\Pi\subseteq \mathcal{C}(\mathcal{Z})$ according to Definition~\ref{definition:weak-star-convergence}. Since $\{P_n\}_{n=1}^\infty \in A_\pi$, we know that $E_{P_n}[\pi] \leq C$ for all $n$. Since the limit of a sequence of numbers less than or equal to $C$ must also be less than or equal to $C$ that is:
\begin{align*}
   E_P[\pi] = \lim_{n \to \infty} E_{P_n}[\pi] \leq C 
\end{align*}
Hence, $P \in A_\pi$ and $A_\pi$ is closed. Therefore $\mathcal{P}_{reg}(\Pi)$ is closed.
\begin{lemma}
    $\mathcal{P}_0=\mathcal{P}_{reg}(\Pi)$ if an only if $\Pi$ is implementable.
\label{lemma:equivalance-of-preg-and-pie-impl}
\end{lemma}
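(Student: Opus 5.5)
The plan is to unpack both sides of the equivalence into statements about the participation decision $G(\Pi,P)=\mathbbm{1}[\sup_{\pi\in\Pi}\mathbb{E}_P[\pi(Z)]>C]$, which is the decision rule the paper fixes for providers who know their type. By the definition of $\mathcal{P}_{reg}(\Pi)$, we have $G(\Pi,P)=0$ if and only if $P\in\mathcal{P}_{reg}(\Pi)$. So $G(\Pi,\cdot)$ is exactly the indicator of the complement of $\mathcal{P}_{reg}(\Pi)$ in $\Delta(\mathcal{Z})$. Likewise, $\mathfrak{R}(\cdot)$ is the indicator of the complement of $\mathcal{P}_0$. The lemma then reduces to the observation that two $\{0,1\}$-valued functions on $\Delta(\mathcal{Z})$ coincide exactly when their zero sets coincide.

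\textbf{Direction ($\Leftarrow$).} Assume $\mathcal{P}_0=\mathcal{P}_{reg}(\Pi)$. Take any $P$.
\begin{itemize}
\item If $\mathfrak{R}(P)=0$, then $P\in\mathcal{P}_0=\mathcal{P}_{reg}(\Pi)$. Hence $\sup_{\pi}\mathbb{E}_P[\pi]\le C$, so $G(\Pi,P)=0$. This is precisely obedience (Definition~\ref{def:obidience_to_regulation}).
\item If $\mathfrak{R}(P)=1$, then $P\notin\mathcal{P}_{reg}(\Pi)$, so $\sup_{\pi}\mathbb{E}_P[\pi]>C$. Because the supremum strictly exceeds $C$, some $\pi\in\Pi$ satisfies $\mathbb{E}_P[\pi]>C$. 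This gives feasibility (Definition~\ref{def:feasibility-of-regulation}) and $G(\Pi,P)=1$.
\end{itemize}
Therefore $G(\Pi,P)=\mathfrak{R}(P)$ for every $P$, and $\Pi$ is implementable (Definition~\ref{def:implementable-regulation}).

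\textbf{Direction ($\Rightarrow$).} Assume $G(\Pi,P)=\mathfrak{R}(P)$ for all $P$. We prove the two inclusions.
\begin{itemize}
\item For $P\in\mathcal{P}_0$: $\mathfrak{R}(P)=0$ gives $G(\Pi,P)=0$, i.e.\ $\sup_\pi \mathbb{E}_P[\pi]\le C$, so $P\in\mathcal{P}_{reg}(\Pi)$.
\item For $P\in\mathcal{P}_{reg}(\Pi)$: $G(\Pi,P)=0$, hence $\mathfrak{R}(P)=0$, so $P\in\mathcal{P}_0$.
\end{itemize}
Together these give $\mathcal{P}_0=\mathcal{P}_{reg}(\Pi)$.

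The only step needing care is in the $\Leftarrow$ direction: passing from $\sup_\pi \mathbb{E}_P[\pi]>C$ to the existence of a single license with $\mathbb{E}_P[\pi]>C$, as feasibility demands. This follows directly from the definition of the supremum, since the strict inequality means $C$ is not an upper bound. Beyond that, we must use the model assumption that $G$ is given by the stated indicator rule. No topology is needed here; closedness and convexity (Claims 1 and 2) enter only afterwards, when this lemma is combined with them to deduce that $\mathcal{P}_0$ must be credal.
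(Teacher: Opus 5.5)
Your proof is correct and is essentially the paper's argument. Both reduce implementability to the statement that $\sup_{\pi\in\Pi}\mathbb{E}_P[\pi(Z)]\le C$ holds exactly on $\mathcal{P}_0$: the paper goes through obedience and feasibility (a contradiction for one direction, and a two-case contrapositive for the other whose cases are the contrapositives of your two inclusions), while you read it off directly as equality of the zero sets of $G(\Pi,\cdot)$ and $\mathfrak{R}$. One cosmetic point is that your arrow labels are swapped relative to the statement as written: what you call ($\Leftarrow$) proves $\mathcal{P}_0=\mathcal{P}_{reg}(\Pi)\Rightarrow$ implementable, which is the ($\Rightarrow$) direction.
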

\begin{proof}
$(\Rightarrow)$

We prove this with contradiction. Let us assume that $\mathcal{P}_0=\mathcal{P}_{reg}(\Pi)$ and $\Pi$ is not implementable. From the definition of $\mathcal{P}_{reg}(\Pi)$ we know that 
$$\mathcal{P}_{reg}(\Pi) = \left\{ P \in \Delta(\mathcal{Z}) \;\middle|\; \sup_{\pi \in \Pi} \mathbb{E}_P[\pi(Z)] \leq C \right\}$$
Since $\mathcal{P}_0=\mathcal{P}_{reg}(\Pi)$, this implies that $\Pi$ is obedient to the regulation. However, $\Pi$ is implementable which means that it must not be feasible. As a consequence of infeasibility, there must exist a $P\in\Delta(\mathcal{Z})\setminus\mathcal{P}_0$ such that 
\begin{align}
    \forall \pi\in\Pi \quad \mathbb{E}_{P}[\pi(Z)]\leq C \quad \implies \sup_{\pi\in\Pi} \mathbb{E}_{P}[\pi(Z)]\leq C
\end{align}
Hence, $P\in\mathcal{P}_{reg}(\Pi)$ by definition of $\mathcal{P}_{reg}(\Pi)$. Since we assume that $\mathcal{P}_0=\mathcal{P}_{reg}$, then $P\in\mathcal{P}_0$. This leads to a contradiction as $(\Delta(\mathcal{Z})\setminus\mathcal{P}_0)\cap \mathcal{P}_0=\emptyset$.

$(\Leftarrow)$

We prove this by proving the contrapostive. Let us assume that $\mathcal{P}_0\neq \mathcal{P}_{reg}(\Pi)$. Which results in the following two cases. 
\paragraph{Case 1: \mbox{$\exists$} \mbox{$P$} such that \mbox{$P\in\mathcal{P}_0$} but \mbox{$P\notin\mathcal{P}_{reg}(\Pi)$}} This implies that there exists a $P\in\mathcal{P}_0$ such that $\sup_{\pi\in\Pi}\mathbb{E}_{P}[\pi(Z)]\not\leq C$ as $P\notin\mathcal{P}_{reg}(\Pi)$. Therefore, $\Pi$ is not obedient. Hence, $\Pi$ is not implementable. 
\paragraph{Case 2: \mbox{$\exists$} \mbox{$P$} such that \mbox{$P\in\mathcal{P}_{reg}(\Pi)$} but \mbox{$P\notin\mathcal{P}_{0}$}}. This implies that $P\in\Delta(\mathcal{Z})\setminus\mathcal{P}_0$. However, since $P\in\mathcal{P}_{reg}(\Pi)$, we know that $\sup_{\pi\in\Pi}\mathbb{E}_{P}[\pi(Z)]\leq C$. This reaches a contradiction and $\Pi$ can not be feasible. Hence, $\Pi$ is not implementable.  
\end{proof}
\subsection*{Completing the Proof of Theorem \ref{theorem:obedience-to-credal}}
\textbf{First Part: There exists and implementable $\Pi$ iff $\mathcal{P}_0$ is a Credal Set.}
\\~\\
We now show that the existence of an implementable regulation mechanism $\Pi$ implies that $\mathcal{P}_0$ is Credal. Assume there exists an implementable $\Pi$. By Lemma \ref{lemma:equivalance-of-preg-and-pie-impl}, this implies $\mathcal{P}_0 = \mathcal{P}_{reg}(\Pi)$. By Claims 1 and 2, $\mathcal{P}_{reg}(\Pi)$ is always convex and closed. Therefore, $\mathcal{P}_0$ must be a credal set.

We now show the opposite direction. Assume $\mathcal{P}_0$ is a credal set then there exists an implementable regulation mechanism $\Pi$. We rely on the Hahn-Banach Separation Theorem (See Chapter 3 \citep{rudin1991functional}), which states that any closed convex set is the intersection of all closed half-spaces containing it. Therefore,
for every $Q \notin \mathcal{P}_0$, there exists a continuous linear functional $h_Q \in \mathcal{C}(\mathcal{Z})$ such that:
$$ \sup_{P \in \mathcal{P}_0} \mathbb{E}_P[h_Q(Z)] < \mathbb{E}_Q[h_Q(Z)] $$
To construct a mechanism from the set $\{h_Q\mid Q\notin\mathcal{P}_0\}$, let us define these separated expectations as, $\mu_0 := \sup_{P \in \mathcal{P}_0} \mathbb{E}_P[h_Q(Z)]$ and $\mu_Q := \mathbb{E}_Q[h_Q(Z)]$ where $\mu_0 < \mu_Q$. Because $h_Q \in C(\mathcal{Z})$ is bounded, its infimum and supremum over the domain $\mathcal{Z}$ exist as finite real numbers. Let $h_{min} = \inf_{z \in \mathcal{Z}} h_Q(z)$ and $h_{max} = \sup_{z \in \mathcal{Z}} h_Q(z)$. Then $h_{min}\leq \mu_0<\mu_Q\leq h_{max}$. We can construct our valid license $\pi_Q(z)$ by applying an affine transformation to $h(z)$.i.e. $\pi_Q(z) = \alpha h(z) + \beta$ where $\alpha > 0$ is a scaling factor and $\beta \in \mathbb{R}$ is a translation shift. Then $\Pi = \{ \pi_Q \mid Q \notin \mathcal{P}_0 \}$ can be shown to be both obedient and feasible. To exactly satisfy obedience, we choose the transformation such that:
\begin{align*}
    \sup_{P \in \mathcal{P}_0} \mathbb{E}_P[\pi_Q(Z)] = \alpha \mu_0 + \beta = C
\end{align*}
Since we can do this for every $\pi_Q$, $\Pi$ satisfies obedience by construction. Now, solving for $\beta$ yields $\beta = C - \alpha \mu_0$. We can verify feasibility as follows: 
\begin{align*}
    \mathbb{E}_Q[\pi_Q(Z)] &= \alpha\mathbb{E}_Q[h_Q(Z)]  + \beta\\
    &= \alpha \mu_Q + C - \alpha \mu_0\\ 
    &= C + \alpha(\mu_Q - \mu_0) > C
\end{align*}. 

Additionally, we must ensure the transformed license conforms to $0 \le \pi_Q(z) \le R$ for all $z \in \mathcal{Z}$:
\begin{align*}0 \le \alpha h_Q(z) + C - \alpha \mu_0 \le R.
\end{align*}
Therefore, $\alpha(\mu_0 - h_{min}) \le C$ and $\alpha(h_{max} - \mu_0) \le R - C$. If $\mu_0 > h_{min}$ then $\alpha = \frac{1}{2} \min \left( \frac{C}{|\mu_0 - h_{min}|}, \frac{R - C}{|h_{max} - \mu_0|} \right)$ and if  $\mu_0=h_{min}$, then $h_{Q}\geq \mu_0=h_{min}$ everywhere. So $\alpha h_Q+\beta\geq \alpha \mu_0+\beta=C\geq 0$, therefore the lower bound is trivially 0 in this case and we can pick any $\alpha\in(0,\frac{R-C}{h_{max}-\mu_0}]$. Hence is a valid $\Pi$ implementable mechanism.

\textbf{Second Part: Thresholding rule are implementable iff $r$ is quasi convex and lower semi-continuous}\\~\\
We now show the second part of the Proof of Theorem ~\ref{theorem:obedience-to-credal} which states that there exists an implementable regulation mechanism $\Pi$ for a requirement obtained via thresholding a metric $r$ if and only if the metric is quasi-convex and lower semi-conitnuous.

$(\Rightarrow)$

Assume that $\mathfrak{R}_\tau$ is implementable for all $\tau \in \mathbb{R}$. By first part of the Theorem~\ref{theorem:obedience-to-credal}, this implies that the set $\mathcal{P}^\tau_0 = \{ P \mid r(P) \le \tau \}$ is a Credal Set (convex and closed) for every $\tau$. 

Consider any two distributions $P_1, P_2 \in \Delta(\mathcal{Z})$ and any $\lambda \in [0, 1]$. Let $\tau^* = \max(r(P_1), r(P_2))$. By construction, $r(P_1) \le \tau^*$ and $r(P_2) \le \tau^*$, which implies $P_1 \in \mathcal{P}^{\tau^*}_0$ and $P_2 \in \mathcal{P}^{\tau^*}_0$. Since $\mathfrak{R}_{\tau^*}$ is implementable, $\mathcal{P}^{\tau^*}_0$ is convex. Therefore, the mixture $P_\lambda = \lambda P_1 + (1-\lambda)P_2$ must also belong to $\mathcal{P}_{\tau^*}$. 
\begin{align*}
P_\lambda \in \mathcal{P}_{\tau^*} \implies r(P_\lambda) \le \tau^*=\max(r(P_1), r(P_2)).
\end{align*}
Therefore $r$ is quasi-convex. By the initial assumption, $\mathcal{P}^\tau_0$ is a closed set for all $\tau$. Since the sublevel sets of $r$ are closed for all $\tau$, $r$ is lower semi-continuous by definition.

$(\Leftarrow)$

Assume that $r$ is quasi-convex and lower semi-continuous. We must show that $\mathcal{P}^\tau_0$ is a Credal Set for any arbitrary $\tau \in \mathbb{R}$. We now show the convexity of $\mathcal{P}^\tau_0$. Let $P_1, P_2 \in \mathcal{P}^\tau_0$. By definition, $r(P_1) \leq \tau$ and $r(P_2) \leq \tau$. Because $r$ is quasi-convex: 
\begin{align*}
r(\lambda P_1 + (1-\lambda)P_2) \leq \max(r(P_1), r(P_2)) \leq \tau.
\end{align*}
Thus, any convex combination of points in $\mathcal{P}^\tau_0$ remains in $\mathcal{P}^\tau_0$. The set is convex. Since $r$ is lower semi-continuous, its sublevel sets are closed by definition. Thus, $\mathcal{P}^\tau_0$ is closed. Since $\mathcal{P}^\tau_0$ is both convex and closed, it is a Credal Set. Therefore by Theorem~\ref{theorem:obedience-to-credal}, $\mathfrak{R}_\tau$ is implementable for all $\tau$.
\section{Characterisation and Properties of Obedient Regulations}
\subsection{Proof of Theorem~\ref{prop:characterisation-and-invariance-of-obedience}}
\begin{proposition}[Characterisation of Obedient Regulations]
     Given a set marginally undesirable gambles $\mathfrak{G}_{\leq 0,P}$ with respect to $\mathcal{P}_0$ and $\mathfrak{D}_{\geq 0}^{R}$ be the set of all desirable gambles with max payout of $R$, we can characterise the set of all obedient regulations with respect to $\mathcal{P}_0$ as 
     \begin{align*}
         \Pi^{\text{ob}}_{\mathcal{P}_0}= \{\mathfrak{G}_{\leq0,\mathcal{P}_0}+C\}\cap \mathfrak{D}_{\geq 0}^R
     \end{align*}
     where the set $\{\mathfrak{G}_{\leq0,\mathcal{P}_0}+C\}:= \{g+c|g\in\mathfrak{G}_{\leq 0,\mathcal{P}_0}\}$. Additionally, $\Pi^{\text{ob}}_{\mathcal{P}_0}$ is invariant up to the convex hull of $\mathcal{P}_0$ i.e. $\text{co}(\mathcal{P}_0)$. Formally, $\Pi_{\mathcal{P}_0}=\Pi_{\text{co}(\mathcal{P}_0})=\Pi_{\mathcal{Q}}$, where $\text{co}(\cdot)$ is the convex hull of a set and $\mathcal{P}_0\subseteq\mathcal{Q}\subseteq\text{co}(\mathcal{P}_{0})$. 
\end{proposition}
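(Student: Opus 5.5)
The proof is a direct unfolding of definitions; I will prove the characterisation first and then the invariance claim, since the latter uses the former.

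\textbf{Characterisation.} I will show the two inclusions separately, reading the definitions on gambles over $\mathcal{Z}$. Here $\mathfrak{D}_{\geq 0}^R$ is the set of gambles desirable under every distribution with values in $[0,R]$. Since point masses belong to $\Delta(\mathcal{Z})$, this is exactly the set of gambles with $0\le g\le R$ pointwise.

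For the first inclusion, take $\pi\in\Pi^{\textnormal{ob}}_{\mathcal{P}_0}$. Then $0\le\pi\le R$ by the market-cap assumption, so $\pi\in\mathfrak{D}_{\geq 0}^R$. Obedience with respect to $\mathcal{P}_0$ gives $\sup_{P\in\mathcal{P}_0}\mathbb{E}_P[\pi]\le C$. Setting $g:=\pi-C$, we get $\inf_{P\in\mathcal{P}_0}\mathbb{E}_P[-g]=C-\sup_{P\in\mathcal{P}_0}\mathbb{E}_P[\pi]\ge 0$. Hence $-g\in\mathfrak{G}_{\geq 0,\mathcal{P}_0}$, so $g\in\mathfrak{G}_{\leq 0,\mathcal{P}_0}$ and $\pi=g+C$.

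For the reverse inclusion, I run the same computation backwards: $\pi=g+C$ with $g\in\mathfrak{G}_{\le0,\mathcal{P}_0}$ gives $\sup_{P\in\mathcal{P}_0}\mathbb{E}_P[\pi]\le C$, and membership in $\mathfrak{D}^R_{\ge0}$ gives the range constraint. This establishes the set equality and also yields the explicit form of Equation~\ref{eq:characterise-all-obedient-licenses}. One caveat is that licenses are also required to be continuous. I will state that all gamble sets are taken within $\mathcal{C}(\mathcal{Z})$ so that the identification is consistent.

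\textbf{Invariance.} By the characterisation, it suffices to show that for every bounded $\pi$,
\[
\sup_{P\in\mathcal{Q}}\mathbb{E}_P[\pi]=\sup_{P\in\mathcal{P}_0}\mathbb{E}_P[\pi]
\quad\text{whenever } \mathcal{P}_0\subseteq\mathcal{Q}\subseteq\mathrm{co}(\mathcal{P}_0).
\]
The inequality $\ge$ is monotonicity of the supremum in the set. For $\le$, any $P\in\mathrm{co}(\mathcal{P}_0)$ is a finite mixture $\sum_i\lambda_iP_i$ with $P_i\in\mathcal{P}_0$. Linearity of expectation in the measure then gives $\mathbb{E}_P[\pi]=\sum_i\lambda_i\mathbb{E}_{P_i}[\pi]\le\max_i\mathbb{E}_{P_i}[\pi]\le\sup_{\mathcal{P}_0}\mathbb{E}[\pi]$. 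Consequently, the obedience constraint, and hence the set of obedient licenses, is the same for $\mathcal{P}_0$, $\mathcal{Q}$, and $\mathrm{co}(\mathcal{P}_0)$.

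As a side remark, the same argument extends to the closed convex hull by weak-* continuity of $P\mapsto\mathbb{E}_P[\pi]$ for continuous $\pi$, echoing Claim 2.

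\textbf{Main obstacle.} The only delicate point is notational rather than mathematical. The gambles are stated on $\Omega$, while licenses live on $\mathcal{Z}$, and $\mathfrak{D}^R_{\ge0}$ must be read as pointwise bounds in $[0,R]$. I will fix these conventions at the start, using Dirac measures to justify the pointwise reading, after which every step is a one-line calculation.
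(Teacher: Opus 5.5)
Your proposal is correct and follows essentially the same route as the paper: you unfold the definitions of $\mathfrak{G}_{\leq 0,\mathcal{P}_0}$ and $\mathfrak{D}^R_{\geq 0}$ to get the two inclusions, and you use finite mixtures with linearity of expectation in the measure to get invariance under $\mathrm{co}(\mathcal{P}_0)$. You also fill in two steps the paper leaves implicit: the Dirac-measure justification for reading $\mathfrak{D}^R_{\geq 0}$ as pointwise bounds in $[0,R]$, and the explicit sandwich argument for an intermediate $\mathcal{Q}$.
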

\begin{proof}

$(\Rightarrow)$

We want to show that $\Pi^{\text{ob}}_{\mathcal{P}_0}\subseteq \{\mathfrak{G}_{\leq0,\mathcal{P}_0}+C\}\cap \mathfrak{D}_{\geq 0}$. Let us assume that there exists a $\pi'\in\Pi^{\text{ob}}_{\mathcal{P}_0}$. 
Since $\pi'\in\Pi^{\text{ob}}_{\mathcal{P}_0}$ which is a set of all non-negative licenses licesnes bounded by $R$, $\pi':Z\rightarrow[0,1]$. Therefore, $\pi'\in\mathfrak{D}_{\geq 0}^R$ by definition. This follows from the fact that, $\mathfrak{D}_{\geq 0}^R$ is set of all desirable gambles which have a non-negative output and max payout of $R$, i.e. $\forall g\in\mathfrak{D}_{\geq 0}^R$, $g:Z\rightarrow[0,R]$. 

Since $\pi'\in\Pi^{\text{ob}}_{\mathcal{P}_0}$, it follows from the Definition of obedience that, 
\begin{align*}
    E_{P}[\pi'(Z)]&\leq C \quad \forall P\in\mathcal{P}_0\\
    \sup_{P\in\mathcal{P}_0}E_{P}[\pi'(Z)-C]&\leq 0\\
    \implies \quad \pi'-C &\in \mathfrak{G}_{\leq0,\mathcal{P}_0}\\
    \implies \quad \pi'&\in \{\mathfrak{G}_{\leq0,\mathcal{P}_0}+C\}
\end{align*}

$(\Leftarrow)$

We want to show that $\{\mathfrak{G}_{\leq0,\mathcal{P}_0}+C\}\cap \mathfrak{D}_{\geq 0}^R\subseteq\Pi^{\text{ob}}_{\mathcal{P}_0}$. Let us assume that there exists a $g\in\{\mathfrak{G}_{\leq0,\mathcal{P}_0}+C\}\cap \mathfrak{D}_{\geq 0}$. Since $g\in\mathfrak{D}_{\geq 0}^R$ we know that $g:Z\rightarrow[0,R]$. We also that $g\in\{\mathfrak{G}_{\leq0,\mathcal{P}_0}+C\}$ which means that $g-C\in \mathfrak{G}_{\leq0,\mathcal{P}_0}$. Therefore, 
\begin{align*}
    \sup_{P\in\mathcal{P}_0}\mathbb{E}_{P}[g(Z)-C]&\leq 0 \quad \tag*{(By Definition of \mbox{$g-C\in \mathfrak{G}_{\leq0,\mathcal{P}_0}$})}\\
    \sup_{P\in\mathcal{P}_0}\mathbb{E}_{P}[g(Z)]\leq C \quad \text{ and }\quad g:Z&\rightarrow[0,R] \quad \implies g\in\Pi^{\text{ob}}_{\mathcal{P}_0}
\end{align*}
Hence $\Pi^{\text{ob}}_{\mathcal{P}_0}= \{\mathfrak{G}_{\leq0,\mathcal{P}_0}+C\}\cap \mathfrak{D}_{\geq 0}^R$. We now prove the second part of the proof related to invariance of set of obedience regulations upto the convex hull of $\mathcal{P}_0$.

\textbf{Claim 1: (\mbox{$\Pi^{\text{ob}}_{\mathcal{P}_0}$} is a convex set)} To prove $\Pi^{\text{ob}}_{\mathcal{P}_0}$ is a convex set, we must show that a convex mixture of any two obedient licenses is also obedient to regulation.
Let $\pi_1, \pi_2 \in \Pi^{\text{ob}}_{\mathcal{P}_0}$. By definition, this means:$$\forall P \in \mathcal{P}_0, \quad \mathbb{E}_P[\pi_1] \leq C \quad \text{and} \quad \mathbb{E}_P[\pi_2] \leq C$$
Consider a mixture $\pi_{\lambda} = \lambda \pi_1 + (1-\lambda)\pi_2$ for any $\lambda \in [0,1]$.
We test if $\pi_{\lambda}$ satisfies the constraint for an arbitrary $P \in \mathcal{P}_0$:$$\mathbb{E}_P[\pi_{\lambda}] = \mathbb{E}_P[\lambda \pi_1 + (1-\lambda)\pi_2]$$

By the linearity of the expectation operator (w.r.t the function):$$\mathbb{E}_P[\pi_{\lambda}] = \lambda \mathbb{E}_P[\pi_1] + (1-\lambda) \mathbb{E}_P[\pi_2]$$

Since $\lambda \geq 0$ and $(1-\lambda) \geq 0$, we can apply the inequalities from Step 1:$$\lambda \mathbb{E}_P[\pi_1] + (1-\lambda) \mathbb{E}_P[\pi_2] \leq \lambda C + (1-\lambda) C$$$$= C(\lambda + 1 - \lambda) = C$$

Therefore, $\sup_{P \in \mathcal{P}_0} \mathbb{E}_P[\pi_{\lambda}] \leq C$. The mixture $\pi_{\lambda}$ is in $\Pi$.

\textbf{Claim 1: (\mbox{$\Pi$} is a invariant up to convex hull of \mbox{$\mathcal{P}_0$})} Additionally, we want to show that $\Pi$ is invariant upto convex hull of $\mathcal{P}_0$. We claim that the set of licenses obedient to $\mathcal{P}_0$ is identical to the set of licenses obedient to $\text{co}(\mathcal{P}_0)$.$$\Pi_{\mathcal{P}_0} = \Pi_{\text{co}(\mathcal{P}_0)}$$

($\Leftarrow$)

Since $\mathcal{P}_0 \subseteq \text{co}(\mathcal{P}_0)$, any constraint that applies to the larger set $\text{co}(\mathcal{P}_0)$ automatically applies to the subset $\mathcal{P}_0$.
If $\sup_{Q \in \text{co}(\mathcal{P}_0)} \mathbb{E}_Q[\pi] \leq C$, then trivially $\sup_{P \in \mathcal{P}_0} \mathbb{E}_P[\pi] \leq C$.
Thus, $\Pi_{\text{co}(\mathcal{P}_0)} \subseteq \Pi_{\mathcal{P}_0}$.

($\Rightarrow$)

Let $Q$ be any distribution in the convex hull $\text{co}(\mathcal{P}_0)$. By definition of a convex hull, $Q$ is a finite convex combination of elements in $\mathcal{P}_0$:$$Q = \sum_{i=1}^n \alpha_i P_i$$where $P_i \in \mathcal{P}_0$, $\alpha_i \geq 0$, and $\sum \alpha_i = 1$. Evaluate the expected cost of $\pi$ under $Q$:$$\mathbb{E}_Q[\pi] = E_{\sum \alpha_i P_i}[\pi]$$By the linearity of the expectation operator (w.r.t the measure):$$\mathbb{E}_Q[\pi] = \sum_{i=1}^n \alpha_i E_{P_i}[\pi]$$Since $\pi \in \Pi_{\mathcal{P}_0}$, we know that $E_{P_i}[\pi] \leq C$ for all $i$. Substituting this bound:$$\sum_{i=1}^n \alpha_i E_{P_i}[\pi] \leq \sum_{i=1}^n \alpha_i C$$$$= C \sum_{i=1}^n \alpha_i = C$$ Therefore, $\mathbb{E}_Q[\pi] \leq C$ for all $Q \in \text{co}(\mathcal{P}_0)$.
\end{proof}

\subsection{Proof of Alternate Characterisation in Proposition ~\ref{prop:characterisation-and-invariance-of-obedience}}
\begin{proof}
    We prove the alternate characterisation by contradiction. Let us assume that is $\Pi^{\text{ob}}_{\mathcal{P}_0}$ the collection of all licenses that are obedient to regulation and is therefore itself obedient to regulation by definition. However, $$\Pi^{\text{ob}}_{\mathcal{P}_0}\neq \left\{\pi\middle| \sup_{P\in\mathcal{P}_0}\mathbb{E}_{P}[\pi(Z)]\leq C\right\}:=\Pi_{C,\mathcal{P}_0}.$$ This means that there exists a $\pi\in\Pi^{\text{ob}}_{\mathcal{P}_0}$ such that $\pi\not\in\Pi_{C,\mathcal{P}_0}$. Therefore, 
    \begin{align*}
        \sup_{P\in\mathcal{P}_0} \mathbb{E}[\pi(Z)]>C
    \end{align*}
    However, since $\Pi^{\text{ob}}_{\mathcal{P}_0}$ follows obedience to regulation we can say that
    \begin{align*}
       \sup_{\pi'\in\Pi^{\text{ob}}_{\mathcal{P}_0}} \mathbb{E}[\pi'(Z)]&\leq C \quad \forall P\in\mathcal{P}_0\\
        \mathbb{E}[\pi(Z)]&\leq C \quad \forall P\in\mathcal{P}_0 \tag*{(Because \mbox{$\pi\in\Pi^{\text{ob}}_{\mathcal{P}_0}$})}\\
        \sup_{P\in\mathcal{P}_0} \mathbb{E}[\pi(Z)]&\leq C
    \end{align*}
    This leads to a contradiction hence $\Pi^{\text{ob}}_{\mathcal{P}_0}$ must equal $\Pi_{C,\mathcal{P}_0}$ if it is the set of all obedient regulations. 
\end{proof}

\subsection{Properties of the Mechanism of All Obedient Regulations 
$\Pi^{\text{ob}}_{\mathcal{P}_0}$}
\label{subsection:properties_of_obedient_regulations}

We now proof the additional claims we make about $\Pi^{\text{ob}}_{\mathcal{P}_0}$.
\begin{proposition}
    $\Pi^{\text{ob}}_{\mathcal{P}_0}$ is closed in weak-topology on $\mathcal{C}(\mathcal{Z})$. 
\end{proposition}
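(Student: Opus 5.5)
The plan is to write $\Pi^{\text{ob}}_{\mathcal{P}_0}$ as an intersection of sets, each closed in the weak topology $\sigma(\mathcal{C}(\mathcal{Z}),\Delta(\mathcal{Z}))$. Here the weak topology on $\mathcal{C}(\mathcal{Z})$ is the one induced by the evaluation pairings $\pi\mapsto \mathbb{E}_P[\pi(Z)]$ for $P\in\Delta(\mathcal{Z})$. This is the natural dual of the weak-* topology on measures used in the rest of the paper. By Equation~\ref{eq:characterise-all-obedient-licenses},
\begin{equation*}
\Pi^{\text{ob}}_{\mathcal{P}_0}=\Bigl(\bigcap_{P\in\mathcal{P}_0}\bigl\{\pi\in\mathcal{C}(\mathcal{Z}) : \mathbb{E}_P[\pi(Z)]\le C\bigr\}\Bigr)\cap\Bigl(\bigcap_{z\in\mathcal{Z}}\bigl\{\pi : 0\le\pi(z)\le R\bigr\}\Bigr).
\end{equation*}
Since arbitrary intersections of closed sets are closed, it suffices to show that each constituent set is weakly closed.

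For the first family, I fix $P\in\mathcal{P}_0$. The map $\pi\mapsto\mathbb{E}_P[\pi(Z)]$ is linear and, by the definition of the weak topology, continuous. The set $\{\pi:\mathbb{E}_P[\pi]\le C\}$ is therefore the preimage of the closed set $(-\infty,C]$ under a continuous map, and so it is closed. This mirrors Claim 2 in the proof of Theorem~\ref{theorem:obedience-to-credal}, with the roles of measures and functions swapped. Concretely, for a convergent net $\pi_\alpha\to\pi$ we have $\mathbb{E}_P[\pi_\alpha]\to\mathbb{E}_P[\pi]$, and a limit of numbers bounded by $C$ is itself bounded by $C$.

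For the market-cap family, I use point evaluations. For each $z\in\mathcal{Z}$ the Dirac measure $\delta_z$ belongs to $\Delta(\mathcal{Z})$, so $\pi\mapsto\pi(z)=\mathbb{E}_{\delta_z}[\pi]$ is weakly continuous. The set $\{\pi: 0\le\pi(z)\le R\}$ is then the preimage of $[0,R]$ and hence closed. Intersecting over all $z$ gives the box constraint.

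The main obstacle is pinning down exactly which topology ``weak topology on $\mathcal{C}(\mathcal{Z})$'' refers to. If it means the Banach-space weak topology $\sigma(\mathcal{C},\mathcal{C}^*)$, the argument still goes through, because every $P$ and every $\delta_z$ defines a bounded linear functional on $(\mathcal{C}(\mathcal{Z}),\|\cdot\|_\infty)$. The same argument also gives norm-closedness, since the weak topology is coarser than the norm topology. I would state this explicitly in the proof so that it covers either reading. Finally, I would combine closedness with Claim 1 (convexity) to conclude that $\Pi^{\text{ob}}_{\mathcal{P}_0}$ is closed and convex.
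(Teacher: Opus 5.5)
Your proof is correct and takes the same route as the paper. Both write $\Pi^{\text{ob}}_{\mathcal{P}_0}$ as an intersection of half-spaces $\{\pi:\mathbb{E}_P[\pi(Z)]\le C\}$, $P\in\mathcal{P}_0$, each closed because $\pi\mapsto\mathbb{E}_P[\pi(Z)]$ is weakly continuous. Your version is more complete on two points. The paper's appendix argument drops the $[0,R]$ range constraint, which you recover through the Dirac evaluations $\pi\mapsto\pi(z)$. It also argues only with sequences, which in the non-metrizable weak topology gives sequential closedness, whereas your preimage (or net) argument gives closedness directly.
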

\begin{proof}
    We start with the alternate characterisation of $\Pi^{\text{ob}}_{\mathcal{P}_0}$ from Theorem~\ref{prop:characterisation-and-invariance-of-obedience}, i.e. \begin{align*}\Pi^{\text{ob}}_{\mathcal{P}_0}&=\left\{\pi\middle| \sup_{P\in\mathcal{P}_0}\mathbb{E}_{P}[\pi(Z)]\leq C\right\}\\
    &= \bigcap_{P\in\mathcal{P}_0}\left\{\pi\middle| \mathbb{E}_{P}[\pi(Z)]\leq C\right\}\\
    &= \bigcap_{P\in\mathcal{P}_0} A_P
    \end{align*}
The set of obedient regulations is the intersection of half planes $A_P$ for each $P\in\mathcal{P}_0$. Therefore, $\Pi^{\text{ob}}_{\mathcal{P}_0}$ is closed if $A_P$ is closed for every $P\in\mathcal{P}_0$ as arbitrary intersections of closed sets is closed. We now show that in our space of continuous functions $\mathcal{C}(\mathcal{Z})$ with corresponding weak topology induced by the dual space of probability measures $\Delta(\mathcal{Z})$, all half spaces such as $A_P$ are closed. Given an arbitrary $A_P$ half space, let us now consider a sequence of $\{\pi_n\}_{n=1}^\infty\in A_P$ such that $\{\pi_n\}_{n=1}^\infty\to \pi$. Then, weak-topology on $\mathcal{C}(\mathcal{Z})$ implies convergence of the expectations $E_{P}[\pi_n] \to E_P[\pi]$ for every $P\in\Delta(\mathcal{Z})$. As $\{\pi_n\}_{n=1}^\infty \in A_P$, we know that $E_{P}[\pi_n] \leq C$ for all $n$. Since the limit of a sequence of numbers less than or equal to $C$ must also be less than or equal to $C$: 
\begin{align*}
   E_P[\pi] = \lim_{n \to \infty} E_{P}[\pi_n] \leq C 
\end{align*}
Hence, $\pi \in A_P$ and $A_P$ is closed. Therefore $\Pi^{\text{ob}}_{\mathcal{P}_0}$ is closed. 
\end{proof}
\section{Proof of Lemma~\ref{lemma:all-obidient-or-nothing}}
\begin{lemma}
    Let $\mathcal{P}_0$ be a credal set. If $\Pi$ is implementable and $\pi$ is any license such that $\{\pi\}$ satisfies obedience, then the expanded mechanism $\Pi \cup \{\pi\}$ is also implementable. Consequently, the set of all obedient licenses $\Pi^{\textnormal{obd}}_{\mathcal{P}_0}$ is inherently implementable.    
    \label{lemma:all-obidient-or-nothing-1}
\end{lemma}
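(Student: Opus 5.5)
The plan is to verify obedience and feasibility separately for $\Pi\cup\{\pi\}$, since the paper has already observed that a mechanism satisfying both is implementable. Throughout we use that $\sup$ over a union splits as a maximum: for every $P\in\Delta(\mathcal{Z})$,
\begin{align*}
\sup_{\pi'\in\Pi\cup\{\pi\}}\mathbb{E}_P[\pi'(Z)]=\max\Big\{\sup_{\pi'\in\Pi}\mathbb{E}_P[\pi'(Z)],\ \mathbb{E}_P[\pi(Z)]\Big\}.
\end{align*}
Here $\pi$ is understood to be a license, i.e.\ an element of $\mathcal{C}(\mathcal{Z})$ with values in $[0,R]$, so $\Pi\cup\{\pi\}$ is again a regulation mechanism.

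For \textbf{obedience}, take $P\in\mathcal{P}_0$. Since $\Pi$ is implementable, it is obedient, so the first term in the maximum is at most $C$. Since $\{\pi\}$ is obedient, $\mathbb{E}_P[\pi(Z)]\le C$. Hence the maximum is at most $C$.

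For \textbf{feasibility}, take $Q\notin\mathcal{P}_0$. Feasibility of $\Pi$ gives some $\pi'\in\Pi\subseteq\Pi\cup\{\pi\}$ with $\mathbb{E}_Q[\pi'(Z)]>C$. Adding a license can only enlarge the supremum, so feasibility is monotone under inclusion. Equivalently, via Lemma~\ref{lemma:equivalance-of-preg-and-pie-impl}, one can check that $\mathcal{P}_{reg}(\Pi\cup\{\pi\})=\mathcal{P}_{reg}(\Pi)\cap A_\pi$. Obedience of $\{\pi\}$ gives $\mathcal{P}_0\subseteq A_\pi$, so this intersection equals $\mathcal{P}_0\cap A_\pi=\mathcal{P}_0$.

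For the consequence, the argument extends verbatim to adding an arbitrary family of obedient licenses at once: the supremum over the union is the supremum of the suprema, each bounded by $C$ on $\mathcal{P}_0$. It then remains to show that $\Pi^{\mathrm{ob}}_{\mathcal{P}_0}$ contains some implementable $\Pi$. Because $\mathcal{P}_0$ is credal, Theorem~\ref{theorem:obedience-to-credal} supplies an implementable $\Pi^\ast$, for instance the Hahn--Banach licenses $\{\pi_Q\}$. Each $\pi_Q$ is individually obedient, so $\Pi^\ast\subseteq\Pi^{\mathrm{ob}}_{\mathcal{P}_0}$. Then $\Pi^{\mathrm{ob}}_{\mathcal{P}_0}=\Pi^\ast\cup\Pi^{\mathrm{ob}}_{\mathcal{P}_0}$ is implementable.

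For \emph{largest}: any implementable $\Pi$ is obedient, so every member is individually obedient and $\Pi\subseteq\Pi^{\mathrm{ob}}_{\mathcal{P}_0}$.

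The only delicate point is this last step. It uses that obedience of a menu is equivalent to obedience of each member, and that $\Pi^{\mathrm{ob}}_{\mathcal{P}_0}$ is nonempty with an implementable subset. Both require the credal hypothesis through Theorem~\ref{theorem:obedience-to-credal}; without it, $\Pi^{\mathrm{ob}}_{\mathcal{P}_0}$ would still be obedient but could fail feasibility.
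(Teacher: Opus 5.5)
Your proof is correct and follows essentially the same route as the paper: obedience of a menu reduces to obedience of each member, feasibility is monotone under enlarging the menu, and the credal hypothesis enters only through Theorem~\ref{theorem:obedience-to-credal}, which supplies an implementable $\Pi^\ast\subseteq\Pi^{\mathrm{ob}}_{\mathcal{P}_0}$. The paper phrases the second part as a case-by-case contradiction, whereas you argue directly; this is cleaner and also makes clear that the first claim (adding one obedient license to an implementable $\Pi$) does not need the credal assumption at all.
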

\begin{proof}
    We begin by showing the first part of the proof. Let $\mathcal{P}_0$ be a credal set. Then according to Theorem~\ref{theorem:obedience-to-credal} there exists an implementable regulation mechanism $\Pi\subseteq \mathcal{C}(\mathcal{Z})$. We also consider a $\pi$ such that $\{\pi\}$ satisfies obedience. If $\pi\in\Pi$, then $\Pi\cup\{\pi\}=\Pi$. Hence, $\Pi\cup\{\pi\}$ is implementable. Now, let's consider the case when $\pi\notin\Pi$. In this case $\Pi\cup\{\pi\}$ is still an obedient mechanism because every element in the set satisfies obedience by definition. $\Pi\cup\{\pi\}$ also satisfies feasibility because $\Pi$ satisfies feasibility.

    Given the first part of the proof is done, all we need to show is the following. If $\Pi^{\textnormal{obd}}_{\mathcal{P}_0}$ is not implementable then does not exists another mechanism that is implementable. Additionally, when $\mathcal{P}_0$ is a credal set, $\Pi^{\textnormal{obd}}_{\mathcal{P}_0}$ is the largest implementable mechanism.
    
    Let us assume that $\Pi^{\text{obd}}_{\mathcal{P}_0}$ is not implementable and there exists a set of continuous functions $\Pi=\{\pi:Z\rightarrow\mathbb{R}_{\geq 0}\}$ that is implementable. This means that, either $\Pi\subset \Pi^{\text{obd}}_{\mathcal{P}_0}$ or $\Pi\not\subseteq \Pi^{\text{obd}}_{\mathcal{P}_0}$. Let's consider these two cases separately. 
    \begin{itemize}
        \item \textbf{Case I: \mbox{$\Pi\not\subseteq \Pi^{\textnormal{obd}}_{\mathcal{P}_0}$}} 

    Since $\Pi\not\subseteq \Pi^{\text{obd}}_{\mathcal{P}_0}$, this implies that there exists a $\pi'\in\Pi$ such that $\pi'\not\in\Pi^{\text{obd}}_{\mathcal{P}_0}$. As $\Pi$ is implementable it must satisfy obedience to regulation, i.e. 
    \begin{align}
        \sup_{\pi\in\Pi}\mathbb{E}_P[\pi(Z)]&\leq C \quad \forall P\in\mathcal{P}_0\tag*{(By Definition~\ref{def:obidience_to_regulation})}\\
        \implies \quad \mathbb{E}_P[\pi'(Z)]&\leq C \quad \forall P \in \mathcal{P}_0\\
        \implies \pi'\in\Pi^{\textnormal{obd}}_{\mathcal{P}_0}\quad \tag*{(By Definition of \mbox{$\Pi^{\textnormal{obd}}_{\mathcal{P}_0}$})}
    \end{align}
    This leads to a contradiction, therefore any $\Pi\not\subseteq \Pi^{\text{obd}}_{\mathcal{P}_0}$ cannot be implementable if $\Pi^{\text{obd}}_{\mathcal{P}_0}$ is not implementable.
    \item \textbf{Case II: \mbox{$\Pi\subset \Pi^{\textnormal{obd}}_{\mathcal{P}_0}$}}

    Since $\Pi\subseteq \Pi^{\textnormal{obd}}_{\mathcal{P}_0}$. $\Pi$ is obedient to regulations by definition. Since $\Pi$ is implementable it must also satisfy feasibility. Which means that for every $P\in\Delta(\mathcal{Z})\setminus\mathcal{P}_0$ there exists a $\pi\in\Pi$ such that $\mathbb{E}_{P}[\pi(Z)]>C$. However, since $\Pi\subset\Pi^{\textnormal{obd}}_{\mathcal{P}_0}$. Any such $\pi$ must also belong to $\Pi^{\textnormal{obd}}_{\mathcal{P}_0}$. Which makes $\Pi^{\textnormal{obd}}_{\mathcal{P}_0}$ feasible. Since $\Pi^{\textnormal{obd}}_{\mathcal{P}_0}$ is obedient by definition, it is also implementable. This leads to a contradiction, and therefore $\Pi\subset \Pi^{\textnormal{obd}}_{\mathcal{P}_0}$ cannot be implementable if $\Pi^{\textnormal{obd}}_{\mathcal{P}_0}$ is not implementable.
    \end{itemize}
    Hence, if $\Pi^{\text{obd}}_{\mathcal{P}_0}$ is not implementable there does not exist any implementable mechanism. We now move on the last part of our proof where we show that when $\mathcal{P}_0$ is the credal set, i.e. when an implementable mechanism exists, $\Pi^{\text{obd}}_{\mathcal{P}_0}$ is the largest implementable mechanism. Let us assume that $\mathcal{P}_0$ is a credal set which means that there exists an implementable mechanism $\Pi$ and this mechanism is bigger than $\Pi^{\text{obd}}_{\mathcal{P}_0}$. Then implies that  $\Pi\neq\Pi^{\text{obd}}_{\mathcal{P}_0}$ and $\Pi\not\subseteq\Pi^{\text{obd}}_{\mathcal{P}_0}$, and hence $\exists\pi\in\Pi$ such that $\pi\not\in\Pi^{\text{obd}}_{\mathcal{P}_0}$. Therefore,
    \begin{align*}
        \exists P\in\mathcal{P}_0 \quad \text{ s.t. }\mathbb{E}_{P}[\pi(Z)]> C
    \end{align*}
    This directly contradicts that $\Pi$ satisfies Obedience to Regulation (Defintion~\ref{def:obidience_to_regulation}). Hence $\Pi^{\text{obd}}_{\mathcal{P}_0}$ is the largest implementable mechanism when $\mathcal{P}_0$ is a credal set.  
\end{proof}
\section{Proof of Proposition \ref{prop:credal-implementable-menu}}
\begin{proof}
    Take any arbitrary license $\pi \in \Pi$. By the definition of the mechanism, this license is constructed from some test $\phi \in \bm{\Phi}$ such that $\pi = R \cdot \phi$ where every test $\phi \in \bm{\Phi}$ is a credal test with size $\alpha \le \frac{C}{R}$. By the definition of the size of a test over a composite null hypothesis $\mathcal{P}_0$:
    $$\sup_{P \in \mathcal{P}_0} \mathbb{E}_P[\phi] \le \frac{C}{R}$$
    
    Because the expectation operator is linear and the reward $R$ is a positive constant, we can multiply both sides of the inequality by $R$:
    $$R \cdot \sup_{P \in \mathcal{P}_0} \mathbb{E}_P[\phi] \le R \cdot \left( \frac{C}{R} \right)$$$$\sup_{P \in \mathcal{P}_0} \mathbb{E}_P[R \cdot \phi] \le C$$Substituting $\pi = R \cdot \phi$, we obtain:$$\sup_{P \in \mathcal{P}_0} \mathbb{E}_P[\pi] \le C$$Since this holds true for every $\pi \in \Pi$, the mechanism strictly guarantees Obedience. Now, take any arbitrary compliant model $Q \notin \mathcal{P}_0$, then there exists at least one test $\phi \in \bm{\Phi}$ that is unbiased against $Q$. By the statistical definition of an unbiased test, its statistical power against the alternative hypothesis $Q$ strictly exceeds its size $\alpha$:
    $$\mathbb{E}_Q[\phi] > \alpha$$
    Since the proposition explicitly restricts the set to tests where $\alpha = \frac{C}{R}$, we substitute this value into the inequality:$$\mathbb{E}_Q[\phi] > \frac{C}{R}$$Multiplying both sides by the positive constant $R$ yields:$$R \cdot \mathbb{E}_Q[\phi] > R \cdot \left( \frac{C}{R} \right)$$$$\mathbb{E}_Q[R \cdot \phi] > C$$Letting $\pi = R \cdot \phi$, we have mathematically identified a specific license $\pi \in \Pi$ such that:$$\mathbb{E}_Q[\pi] > C$$Since such a license exists for every strictly compliant model $Q \notin \mathcal{P}_0$, the mechanism guarantees Feasibility. Hence $\Pi$ is implementable.
\end{proof}
\section{Proof of Proposition~\ref{propositon:sequential-implementable-menu}}
Before going onto the main proof we will show a simple identity, 
\begin{proposition}
    Let $a$ be a constant such that $\text{ln}(a)>1$ and $P$ and $Q$ be two distinct distributions in $\Delta(\mathcal{Z})$, then
    \begin{align*}
        \mathbb{E}_{Q}\Bigg[\log\bigg(\min\bigg\{\frac{Q(z)}{P(z)},a\bigg\}\bigg)\Bigg]>0
    \end{align*}
\label{prop:extra-for-prop4.3}
\end{proposition}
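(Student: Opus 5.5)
The plan is to avoid working with $\mathrm{KL}(Q\|P)$ minus a truncation correction, and instead bound the integrand pointwise. Let $L(z):=Q(z)/P(z)$ on $\{Q>0\}$, with $L=+\infty$ where $P=0<Q$, and let $m:=\min\{L,a\}$. The expectation under $Q$ only sees $\{Q>0\}$. I will split this set into $A:=\{L\le a\}$ and $A^c:=\{L>a\}$, so that
\[
\mathbb{E}_Q[\log m]=\mathbb{E}_Q[\log L\,;A]+\log(a)\,Q(A^c).
\]

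On $A$ I will use the elementary inequality $\log x\ge 1-1/x$, which holds with equality only at $x=1$. This gives
\[
\mathbb{E}_Q[\log L\,;A]\ge Q(A)-\mathbb{E}_Q[P/Q\,;A]=Q(A)-P(A\cap\{Q>0\})\ge Q(A)-P(A).
\]
On $A^c$ I will use the hypothesis $\ln a>1$, which gives $\log(a)\,Q(A^c)\ge Q(A^c)$, with strict inequality whenever $Q(A^c)>0$. Adding the two bounds yields
\[
\mathbb{E}_Q[\log m]\ge Q(A)+Q(A^c)-P(A)=1-P(A)\ge 0 .
\]
The naive global bound $\mathbb{E}_Q[\log m]\ge 1-\mathbb{E}_Q[1/m]$ fails, because on $A^c$ it only yields $1/a$ where $\log a$ is needed. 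This is exactly why the condition $a>e$ enters, and why the pieces must be treated separately.

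The main obstacle is strictness, which I will handle by cases.

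If $Q(A^c)>0$, the bound on $A^c$ is already strict, so $\mathbb{E}_Q[\log m]>0$.

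If $Q(A^c)=0$, then $m=L$ $Q$-a.s. Equality throughout the chain would force $\log L=1-1/L$ $Q$-a.s., that is, $L=1$ $Q$-a.s. That means $Q=P$ on $\{Q>0\}$. Since both measures have total mass one, $P$ then puts no mass outside $\{Q>0\}$, so $P=Q$, contradicting distinctness. Hence the inequality is strict in this case as well.

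The measure-theoretic points are routine: read $Q(z)/P(z)$ as the Radon–Nikodym derivative of $Q$ with respect to a common dominating measure such as $P+Q$, and note that the integrand is bounded above by $\log a$, so the expectation is well defined (possibly $-\infty$ a priori, which the lower bound rules out).
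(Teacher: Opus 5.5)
Your proof is correct, and its core is the paper's argument. The paper rewrites the expectation under $P$ and subtracts $\mathbb{E}_P[Q/P]-1=0$. It then checks that $x\log\min\{x,a\}-x+1\ge 0$ pointwise. For $x<a$ this is your inequality $\log x\ge 1-1/x$ multiplied by $x$. For $x\ge a$ it reads $x(\log a-1)+1>0$, the same use of $\ln a>1$ that you make on $A^c$.

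The difference is that you stay under $Q$ and do the mass bookkeeping piecewise, leaving the slack $1-P(A)\ge 0$ rather than invoking $\mathbb{E}_P[Q/P]=1$. This gains real generality. The paper's change of measure $\mathbb{E}_Q[f]=\mathbb{E}_P[(Q/P)f]$ and the identity $\mathbb{E}_P[Q/P]=1$ both tacitly require $Q\ll P$. Your proof also covers the case $Q(\{P=0\})>0$, where strictness is immediate from $Q(A^c)>0$. Your equality-case analysis ($L=1$ $Q$-a.s., hence $P(\{Q>0\})=1$ and $P=Q$) also makes precise the paper's brief remark that the minimum is ``not attained uniformly''.

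In exchange, the paper's version is slightly slicker when $Q\ll P$. It has a single integrand that is nonnegative and vanishes only at $x=1$, so strictness needs no case split.
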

\begin{proof}
    We now take the left hand since of the identity 
    \begin{align*}
        \mathbb{E}_{Q}\Bigg[\log\bigg(\min\bigg\{\frac{Q(z)}{P(z)},a\bigg\}\bigg)\Bigg]&=\int\log\bigg(\min\bigg\{\frac{Q(z)}{P(z)},a\bigg\}\bigg)\frac{Q(z)}{P(z)}P(z)dz\\
        &=\mathbb{E}_{P}\Bigg[\frac{Q(z)}{P(z)}\log\bigg(\min\bigg\{\frac{Q(z)}{P(z)},a\bigg\}\bigg)\Bigg]\\
        &=\mathbb{E}_{P}\Bigg[\frac{Q(z)}{P(z)}\log\bigg(\min\bigg\{\frac{Q(z)}{P(z)},a\bigg\}\bigg)-\frac{Q(z)}{P(z)}+1\Bigg]\tag*{\Bigg(\mbox{$\mathbb{E}_{P}[\frac{Q(z)}{P(z)}]=1$}\Bigg)}
    \end{align*}
We now analyse the expression $\frac{Q(z)}{P(z)}\log\bigg(\min\bigg\{\frac{Q(z)}{P(z)},a\bigg\}\bigg)-\frac{Q(z)}{P(z)}+1$ by looking at $\frac{Q(z)}{P(z)}=x$ where we consider $x\in(0,\infty)$ in the following two cases. When $x<a$, then the expression $x\log(x)-x+1$ is convex and has minimum at $x=1$ as $0$ which is not attained uniformly when $P$ and $Q$ are distinct. When $x\geq a$, then the expression is $x(\log(a)-1)+1$. Which is positive if $\log(a)>1$, or $a\gtrsim 2.73$. Hence the expression inside the expectation is always positive, which proves the identity.
\end{proof}
\begin{proof}
We now show that $\Pi=\{C\cdot\phi_{\tilde{P}} \mid \tilde{P}\in\Delta(\mathcal{Z})\setminus\mathcal{P}_0\}$
satisfies obedience to regulation, where $\phi_{\tilde{P}}(z)=\min\left\{\frac{\tilde{P}(z)}{P^*(z)},\,\frac{R}{C}\right\}$
and $P^*=\arg\min_{P\in\mathcal{P}_0}\mathrm{KL}(Q\|P)$ for an agent with type $Q$.

Let us assume an arbitrary non-compliant agent $P'\in\mathcal{P}_0$.
We now show that $\Pi$ is obedient to regulation:
\begin{align*}
\sup_{\pi\in\Pi}\mathbb{E}_{P'}[\pi(z)]
&=
\sup_{\tilde{P}\in\Delta(\mathcal{Z})\setminus\mathcal{P}_0}
C\,\mathbb{E}_{P'}
\left[\min\left\{\frac{\tilde{P}(z)}{P^*(z)},\,\frac{R}{C}\right\}\right] \\
&\leq\sup_{\tilde{P}\in\Delta(\mathcal{Z})\setminus\mathcal{P}_0}C \min\left\{\mathbb{E}_{P'}\left[\frac{\tilde{P}(z)}{P^*(z)}\right],\,\frac{R}{C}\right\}.
\end{align*}
Since $P'\in\mathcal{P}_0$ when minimizing $P^*=\arg\min_{P\in\mathcal{P}_0}\mathrm{KL}(P'\|P)$, we obtain $P^*=P'$. Hence, 
\begin{align*}
    \sup_{\tilde{P}\in\Delta(\mathcal{Z})\setminus\mathcal{P}_0}C \min\left\{\mathbb{E}_{P'}\left[\frac{\tilde{P}(z)}{P^*(z)}\right],\,\frac{R}{C}\right\}&=\sup_{\tilde{P}\in\Delta(\mathcal{Z})\setminus\mathcal{P}_0}C \min\left\{\mathbb{E}_{P'}\left[\frac{\tilde{P}(z)}{P'(z)}\right],\,\frac{R}{C}\right\}\\
    &= \sup_{\tilde{P}\in\Delta(\mathcal{Z})\setminus\mathcal{P}_0}C \min\left\{1,\,\frac{R}{C}\right\}\\
    &= C \tag*{(\mbox{$R<C$})}
\end{align*}
Hence $\sup_{\pi\in\Pi}\mathbb{E}_{P'}[\pi(z)]<C$ for any arbitrary non-compliant agent with type $P'\in\mathcal{P}_0$. Therefore, $\Pi$ satisfies obedience to regulation. We now show that $\Pi$ also satisfies feasibility of regulation. We know that $R/C\gtrsim 2.73$ i.e. $\text{ln}(R/C)>1$, then according to Proposition~\ref{prop:extra-for-prop4.3} we can say that for all $Q\in\Delta(\mathcal{Z})\setminus\mathcal{P}_0$ and $P\in\Delta(\mathcal{Z})$ such that $Q\neq P$ 
\begin{align*}
        \mathbb{E}_{Q}\Bigg[\log\bigg(\min\bigg\{\frac{Q(z)}{P(z)},\frac{R}{C}\bigg\}\bigg)\Bigg]&>0\\
        \implies \log\Bigg(\mathbb{E}_{Q}\Bigg[\min\bigg\{\frac{Q(z)}{P(z)},\frac{R}{C}\bigg\}\Bigg]\Bigg)&>0 \tag*{(Jensen's Inequality)}\\
        \mathbb{E}_{Q}\Bigg[\min\bigg\{\frac{Q(z)}{P(z)},\frac{R}{C}\bigg\}\Bigg]&>1\\
        \mathbb{E}_{Q}\Bigg[C\min\bigg\{\frac{Q(z)}{P(z)},\frac{R}{C}\bigg\}\Bigg]&>C
    \end{align*}
Since $Q\in\Delta(\mathcal{Z})\setminus\mathcal{P}$, therefore $P^*$ is always guaranteed to be distinct from $Q$ and hence, 
\begin{align*}
    \mathbb{E}_{Q}\Bigg[C\min\bigg\{\frac{Q(z)}{P^*(z)},\frac{R}{C}\bigg\}\Bigg]&>C
\end{align*}
where $P^*=\argmin_{P\in\mathcal{P}_0}\text{KL}(Q\|P)$. Which is true for all $Q\in\Delta(\mathcal{Z})\setminus\mathcal{P}$. Hence, $\Pi$ also satisfies feasibility of regulation. Therefore, $\Pi$ is implementable. 
\end{proof}
\section{Background on Operationalising Regulations with Implicit Credal Sets}
\label{app:martingale_background}
In this section, we do a quick review of the background concepts underpinning our proof of Proposition~\ref{proposition:matringale-implementable} which allows the regulators build implementable mechanisms with implicit credal sets. We focus specifically focusing on filtrations, martingales, and predictable processes. For more detailed technical exposition readers may refer to \citet{williams1991probability}.
\begin{definition}[Filtrations and Measurability] Given a probability space $(\Omega, \mathcal{F}, P)$, a filtration is a non-decreasing sequence of sub-$\sigma$-algebras $(\mathcal{F}_n)_{n \geq 0}$ such that:
\begin{equation}
    \mathcal{F}_0 \subseteq \mathcal{F}_1 \subseteq \dots \subseteq \mathcal{F}_n \subseteq \dots \subseteq \mathcal{F}.
\end{equation}
\end{definition}
Intuitively, $\mathcal{F}_n$ represents the information available at time $n$. In our context, if $Z_1, Z_2, \dots$ is the sequence of observed evidence, the natural filtration is defined as $\mathcal{F}_n = \sigma(Z_1, \dots, Z_n)$, with $\mathcal{F}_0$ being the trivial $\sigma$-algebra $\{\emptyset, \Omega\}$. A stochastic process $M = (M_n)_{n \geq 1}$ is said to be \emph{adapted} to the filtration $(\mathcal{F}_n)_{n \geq 0}$ if $M_n$ is $\mathcal{F}_n$-measurable for all $n$. A key concept in defining valid betting strategies is predictability. A process $M = (M_n)_{n \geq 1}$ is called predictable with respect to the filtration $(\mathcal{F}_n)_{n \geq 0}$ if $M_n$ is measurable with respect to the \textit{previous} time step's information, $\mathcal{F}_{n-1}$. Formally, for all $n \geq 1$, $M_n$ is $\mathcal{F}_{n-1}$-measurable. This property ensures that a betting strategy (such as the choice of $\lambda_n$) depends only on past observations and not on the outcome of the current round.
\begin{definition}[Martingales and Supermartingales]
Given a probability space $(\Omega, \mathcal{F}, P)$, a stochastic process $M = (M_n)_{n \geq 0}$ adapted to a filtration $(\mathcal{F}_n)_{n \geq 0}$ is called a martingale with respect to $P$ if, for all $n \geq 1$:
\begin{equation}
    \mathbb{E}_P[M_n \mid \mathcal{F}_{n-1}] = M_{n-1} \quad P\text{-almost surely.}
\end{equation}
Similarly, $M = (M_n)_{n \geq 0}$ is called a supermartingale with respect to $P$ if the expected future value is non-increasing:
\begin{equation}
    \mathbb{E}_P[M_n \mid \mathcal{F}_{n-1}] \le M_{n-1} \quad P\text{-almost surely.}
\end{equation}
\end{definition}
The martingale represents a ``fair game'' between a forecaster and the nature where the expected future value, given current information, equals the current value. Whereas super-martingales represent games that are unfavourable (or at best neutral) to the forecaster. By the tower property of conditional expectation, this implies $\mathbb{E}[M_n] \le \mathbb{E}[M_0]$ for all $n$. We encourage the reader to refer to \citet{vovk2005defensive} for a detailed exposition of the seqeuntial forecasting game interpretation. This allows martingales to be used in hypothesis testing, as a martingale $M_n$ with respect to a distribution $P$ at any step $n$ would in expectation not be more than $M_0$. Therefore, we can treat the value of $M_n$ to be the evidence against the null. However, in many settings the null hypothesis is not a single distribution but a set of distributions $\mathcal{P}_0$, often referred to as a composite null. To this end, we define a test (super)-martingale
\begin{definition}[Test Supermartingale \citep{shafer2011test}]
Let $(\Omega, (\mathcal{F}_n)_{n \geq 0})$ be a filtered measurable space, and let $\mathcal{P}_0$ be a set of probability measures on $(\Omega, \mathcal{F})$. Then, a process $(M_n)_{n \geq 0}$ is a test supermartingale with respect to the class $\mathcal{P}_0$ if:
\begin{enumerate}
    \item $M_n$ is non-negative and adapted to $\mathcal{F}_n$ for all $n \geq 0$,
    \item $M_0 = 1$ almost surely, and
    \item For every distribution $P \in \mathcal{P}_0$ and all $n \geq 1$, $M_n$ is a supermartingale under $P$:
    \begin{equation}
        \mathbb{E}_P[M_n \mid \mathcal{F}_{n-1}] \le M_{n-1} \quad P\text{-almost surely}.
    \end{equation}
\end{enumerate}
\end{definition}
This condition implies that $(M_n)$ yields valid evidence against the \textit{entire} set $\mathcal{P}_0$. Although not part of the original definition, it is easy to verify that $(M_n)_{n \geq 0}$ is a test supermatringale with respect to $\text{conv}(\mathcal{P}_0)$, where $\text{conv}(\cdot)$ denotes the convex hull. Thus with the closure of the convex hull, the process $(M_n)_{n \geq 0}$ tests against an implicit credal set $\overline{\text{conv}(\mathcal{P}_0)}$.

\subsection{Proof of Proposition~\ref{proposition:matringale-implementable}}
\begin{proof}
The the proof consists of two parts. First, we show that the mechanism satisfies \emph{obedience} (Definition~\ref{def:obidience_to_regulation}) by proving that $(\pi_n)_{n \geq 0}$ is a test supermartingale under the null hypothesis. Second, we show \emph{feasibility} (Definition~\ref{def:feasibility-of-regulation}) by demonstrating exponential growth under compliant distributions. Since the thresholding on $\pi_n$ occurs only once at the end when the license is provided, we can effectively ignore the threshold in our analysis for obedience and focus on the case where license value is less than $R$.

\textbf{Part 1: Obedience.} 
Let $\mathcal{F}_n = \sigma(z_1, \dots, z_n)$ be the natural filtration. Recall that our license evolves as a wealth process i.e. $\pi_n = \pi_{n-1} (1 + \lambda_n (h(z_n) - \tau))$. Consider any non-compliant distribution $P \in \mathcal{P}_0$. By Assumption~\ref{assumption:linearisable_requirements}, for any $P \in \mathcal{P}_0$, we have $\mathbb{E}_P[h(z)] \leq \tau$.

Taking the conditional expectation of $\pi_n$ with respect to $\mathcal{F}_{n-1}$:
\begin{align*}
    \mathbb{E}_P[\pi_n \mid \mathcal{F}_{n-1}] &= \mathbb{E}_P\left[ \pi_{n-1} \left(1 + \lambda_n (h(z_n) - \tau)\right) \mid \mathcal{F}_{n-1} \right] \\
    &= \pi_{n-1} \left( 1 + \lambda_n \left( \mathbb{E}_P[h(z_n) \mid \mathcal{F}_{n-1}] - \tau \right) \right).
\end{align*}
Here, $\pi_{n-1}$ and $\lambda_n$ are factored out of the expectation because $\pi_{n-1}$ is $\mathcal{F}_{n-1}$-measurable and the strategy $\lambda_n$ is predictable (determined by $\mathcal{F}_{n-1}$). Since $z_n$ is i.i.d., $\mathbb{E}_P[h(z_n) \mid \mathcal{F}_{n-1}] = \mathbb{E}_P[h(z)]$. Substituting the null constraint $\mathbb{E}_P[h(z)] - \tau \leq 0$:
\begin{equation*}
    \mathbb{E}_P[\pi_n \mid \mathcal{F}_{n-1}] \leq \pi_{n-1} (1 + \lambda_n \cdot 0) = \pi_{n-1}.
\end{equation*}
This inequality confirms that $(\pi_n)_{n \geq 0}$ is a non-negative super-martingale with respect to any $P \in \mathcal{P}_0$. By the defining property of super-martingales, $\mathbb{E}_P[\pi_n] \leq \mathbb{E}_P[\pi_0] = C$ for all $n \in \mathbb{N}$. Consequently, the mechanism satisfies obedience.

\textbf{Part 2: Feasibility.} Now consider a compliant distribution $Q$ such that $\mathfrak{R}(Q) = 1$. By Assumption~\ref{assumption:linearisable_requirements}, this implies that $\Delta_Q := \mathbb{E}_Q[h(z)] - \tau > 0$. We will show that there exists $N \in \mathbb{N}$ such that $\mathbb{E}_Q[\pi_n] > C$ for all $n \geq N$. 

Let $W_n := C \prod_{i=1}^n \big(1 + \lambda^*(h(z_i) - \tau)\big)$ denote the un-truncated wealth process under a fixed strategy $\lambda_n \equiv \lambda^*$, so that the issued license is $\pi_n = \min(W_n, R)$. We choose $\lambda^* \in (0, B]$ small enough that two conditions hold:
(1) \emph{Admissibility:} $1 + \lambda^*(h(z) - \tau) \geq 0$ almost surely, ensuring $W_n \geq 0$ and (2) \emph{Positive log-growth:} $\mathbb{E}_Q\!\left[\log\!\big(1 + \lambda^*(h(z) - \tau)\big)\right] > 0$. Both conditions can be simultaneously satisfied for sufficiently small $\lambda^* > 0$. Indeed, define $g(\lambda) := \mathbb{E}_Q[\log(1 + \lambda(h(z) - \tau))]$. Then $g(0) = 0$ and $g'(0) = \mathbb{E}_Q[h(z) - \tau] = \Delta_Q > 0$,
so $g(\lambda^*) > 0$ for all sufficiently small $\lambda^* > 0$. This is the standard Kelly-betting argument~\citep{kelly1956new}. Taking logarithms on $W_n$ we obtain,
\begin{align*}
\log W_n = \log C + \sum_{i=1}^n \log\!\big(1 + \lambda^*(h(z_i) - \tau)\big).
\end{align*}
Since the $z_i$ are i.i.d.\ under $Q$, the strong law of large numbers yields
\begin{align*}
\frac{1}{n} \log W_n \;\xrightarrow{n \to \infty}\; \mathbb{E}_Q\!\left[\log\!\big(1 + \lambda^*(h(z) - \tau)\big)\right] > 0 \quad Q\text{-almost surely}
\end{align*}
Therefore $\log W_n \to \infty$ and consequently $W_n \to \infty$ $Q$-almost surely. Since $\pi_n = \min(W_n, R)$ and $W_n \to \infty$ $Q$-almost surely, we have
\begin{align*}
\pi_n \;\xrightarrow{n \to \infty}\; R \quad Q\text{-almost surely.}
\end{align*}

Given that the licenses are uniformly bounded, i.e. $||\pi_n||_{\infty} \leq R$ for all $n$
\begin{align*}
\lim_{n \to \infty} \mathbb{E}_Q[\pi_n] \;=\; \mathbb{E}_Q\!\left[\lim_{n \to \infty} \pi_n\right] \;=\; R.
\end{align*}
Since $R > C$ by assumption, there exists $N \in \mathbb{N}$ such that $\mathbb{E}_Q[\pi_n] > C$ for all  $n \geq N$. Hence the mechanism $\Pi_n = \{\pi_n[\lambda]\}_\lambda$ satisfies feasibility for all $n \geq N$.
\end{proof}

\section{Connection to Strategic Hypothesis Testing} 
\label{appendix:hypothesis-testing}

We now formulate our regulation problem as a classical testing procedure in a non-parametrized fashion to highlight its differences from incentive aware statistical protocol introduced above. Our regulation can be formulated statistical decision making problem via a composite hypothesis test as follows:
\begin{align}
    H_0:\quad P\in \{Q\in\Delta(\mathcal{Z})\text{ s.t. }\mathfrak{R}(Q)=0\} \quad 
    H_1:\quad P\in \{Q\in\Delta(\mathcal{Z})\text{ s.t. }\mathfrak{R}(Q)=1\}
\label{eq:requirement-as-test}
\end{align}
Notice that the set of evidence distributions for both hypothesis $H_0$ and $H_1$ characterise the parameters of the evidence distributions arising according to the Definition~\ref{def:e-safe_model}. While classical hypothesis tests are a valid protocol to ensure regulation they do not incorporate the incentives of the model providers into the problem setup. To understand this better, let us assume that the regulator enforces the requirement $\mathfrak{R}$ with the hypothesis test defined in Equation~\ref{eq:requirement-as-test}. This hypothesis test will have some false positive rate, which we assume to be public information. With the knowledge of this false positive rate $\alpha$, strategic model providers are then incentivised to train and submit many borderline or bad models for which the true $P\in H_0$, when it justifies their cost-benefit calculus. The $\alpha$ false positive rate ensures that $\alpha$ percent of these models will get through the regulation. Thus the classic hypothesis tests often have incentives for the non-compliant model providers to strategise \citep{bates2022principal,hossain2025strategic}. We demonstrate this with a toy example for regulation below  
We consider a toy linear model for our simulations. We further assume that it aims to approximate an original data generating process of $y_i=x_i^T\theta^* + \epsilon$ where $\epsilon\sim\mathcal{N}(0,\sigma)$. Additionally, assume that all features affect the model prediction equally. We now wish to regulate the number of parameters / features used by this model.  Either the model designer could be using all the $d=d_0+1$ features to make the prediction, i.e., designer also uses the sensitive attribute to maximise their prediction capability or the designer could follow the regulation and only use non-sensitive attributes to make prediction. We frame the use of sensitive attribute as the null hypothesis and use of only non sensitive attribute as alternative to build an hypothesis test for regulation. We formalise the test as follows
\begin{align*}
H_0: &\;\; \text{Model is using sensitive attribute }  d=d_0+1,\\
H_1: &\;\; \text{Model is not using sensitive attribute } d=d_0.
\end{align*}
We consider the standardized quadratic error for features $X$ under OLS estimator $\hat{\theta}$ as the test statistic. That is $
Q_{\mathrm{std}}=\frac{n}{\sigma^2}Q=\frac{n}{\sigma^2} \, (\hat{\theta} - \theta^\ast)^\top (X X^\top)(\hat{\theta} - \theta^\ast)$ which is a $\chi^2_d$ distributed and
which under suitable regularity conditions follows a chi-square distribution with degrees of freedom equal to the effective number of parameters used in the model.
Since we know the parametric model of both null and alternative distributions and they are simple singleton hypothesis we can use a likelihood ratio test as it is the optimal test given Neyman Pearson Lemma. Let us denote the test statistic $L:=\frac{L(d_0+1;Q)}{L(d_0;Q)}$ where $Q\sim \chi^2_d$ and $L(d;Q)$ is the likelihood of sample $Q$ from chi-squared distribution with parameter $d$. 
Under $H_0$, assuming that the test statistic has a distribution
$P_{H_0}(L)$. We implement the test to reject $H_0$ if $L>\tau_\alpha$ where $\tau_\alpha$ is the $1-\alpha$ quantile of $P_{H_0}(L)$, so that we obtain strict $\alpha$ type 1 error. 
\paragraph{Strategic Aspects in the test}
The above testing procedure for enforcing regulation ignores the incentives to the model designers. However, in real world the model designers operate under incentives. In this section we consider some incentives that designers may have and try to understand their behaviour under a statistical test for regulation of use of sensitive attributes for training. Let us assume that for regulation tests the regulator charges a fee, this can also be understood as the tax to operate in the market, we denote it using $C$ and we assume that the size of the market is denoted by $R$. Ideally a regulation implemented by a test must deter null agents from entering the market, i.e. non obedient agents self opt out of the market while keeping the obedient agents in the market. 
\begin{figure}
    \centering
    \begin{subfigure}{0.26\textwidth}
        \includegraphics[width=\linewidth]{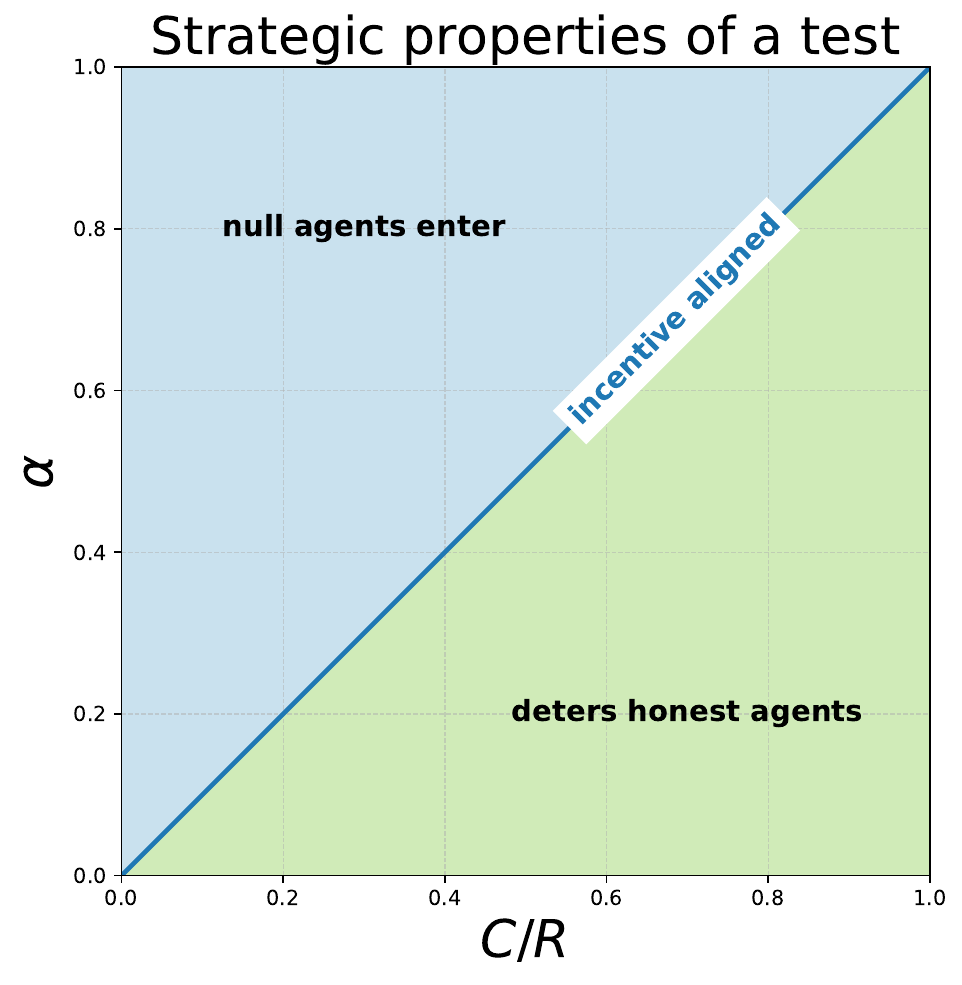}
        \caption{}
    \end{subfigure}
    \begin{subfigure}{0.35\textwidth}
        \includegraphics[width=\linewidth]{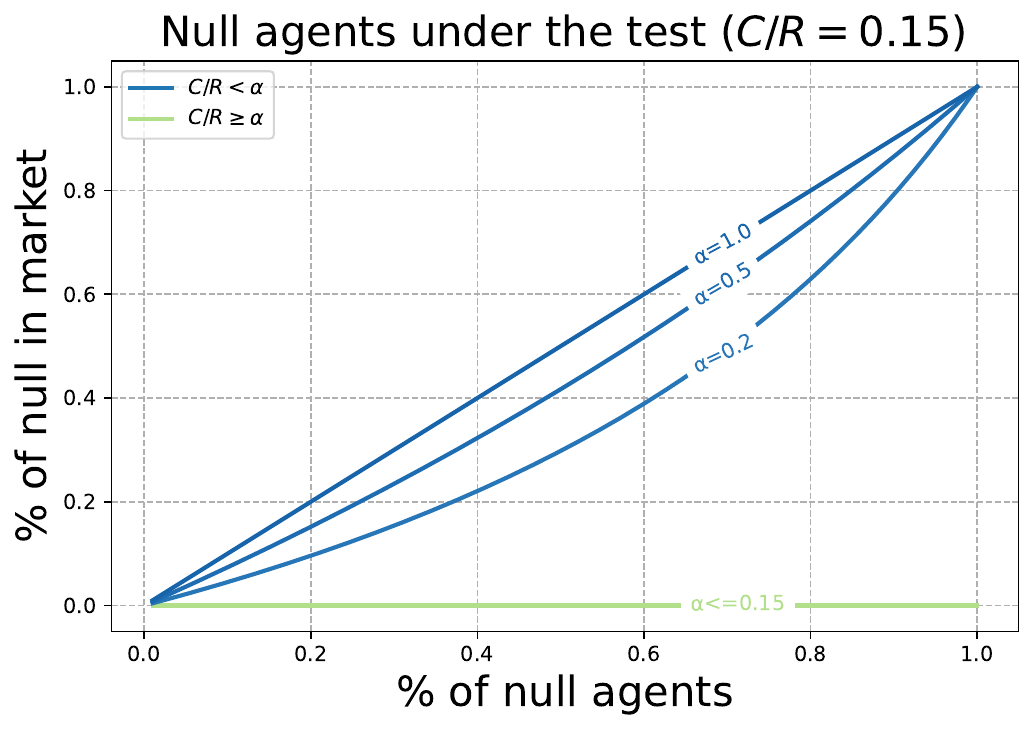}
        \caption{}
    \end{subfigure}
    \begin{subfigure}{0.35\textwidth}
        \includegraphics[width=\linewidth]{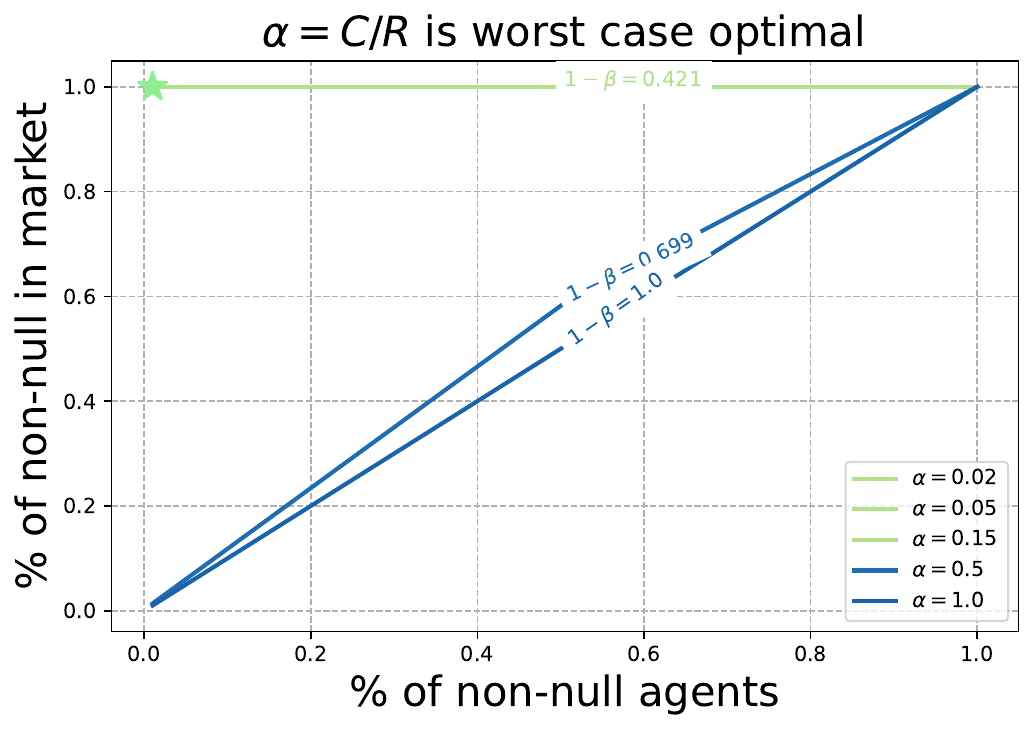}
        \caption{}
    \end{subfigure}
    
    \caption{The strategic reaction of null and non-null agents in the market to regulations via testing. The above figures (b) and (c) assume the incentives in the market by fixing $C/R=0.15$}
    \label{fig:threeplots}
\end{figure}
With the statistical test proposed above to check the effective dimension of the model and thus for the use of sensitive attribute, let us assume that the final test implemented by the regulator has false positive rate $\alpha$, the type II error $\beta(\alpha)$ is denoted as a function of the choice $\alpha$ made by the regulator thus the choice of false positive rate also dictates the power of the test $1-\beta(\alpha)$. In an Ex-ante analysis we can observe that null agents participation in the market depends directly on this $\alpha$, as for a null agent, $\alpha R\geq C$ means that the gamble to enter into the market has net positive expected utility. Thus for $\alpha>\frac{C}{R}$, the market will see full participation for approval by the null agents (see Fig~\ref{fig:threeplots} a), and because of test properties $\alpha$ proportion of the nul agents will also get approved (see Fig~\ref{fig:threeplots} b). Whereas for the non-null agents, the decision to participate in the market depends upon the power of the test i.e. $(1-\beta(\alpha))R\geq C$ which can be seen in the Fig~\ref{fig:threeplots} c that the too strict value of $\alpha$ results in a power below $0.15$ and thus lower than $C/R$ resulting in no participants in the market. As $\alpha$ gradually increases to $\alpha=0.15$ and thus equal to $C/R$ the power of the test increases resulting in more and more non-null agents being approved. 
\section{Additional Experimental Details}
\label{appendix:experiment-details}

\subsection{Details on the Waterbirds Experiment and Dataset}
\label{appendix:experiment-details-waterbirds}
We illustrate the proposed regulation mechanism on the Waterbirds dataset, a standard benchmark for studying spurious correlations in image classification. The dataset is constructed by superimposing bird images from CUB~\citep{wah2011caltech} onto background scenes from Places~\citep{zhou2017places}. The task is binary classification between landbirds and waterbirds. The training distribution exhibits strong spurious correlations: \(73\%\) of examples are landbirds with on land background and \(22\%\) are waterbirds on water, while counter-spurious groups (landbirds on water and waterbirds on land) comprise only \(4\%\) and \(1\%\) of the data, respectively. Validation and test splits are balanced across backgrounds to evaluate robustness. The regulator operates a licensing market in which agents must provide statistical evidence that their predictions do not rely on spurious background features.
\begin{table}[ht]
\centering
\caption{Waterbirds dataset distribution across different sub-groups.}
\resizebox{\textwidth}{!}{%
\begin{tabular}{lcccc}
\toprule
\textbf{Group Description} & \textbf{Landbird on Land} & \textbf{Landbird on Water} & \textbf{Waterbird on Land} & \textbf{Waterbird on Water} \\
\midrule
& \includegraphics[width=2cm, height=2cm]{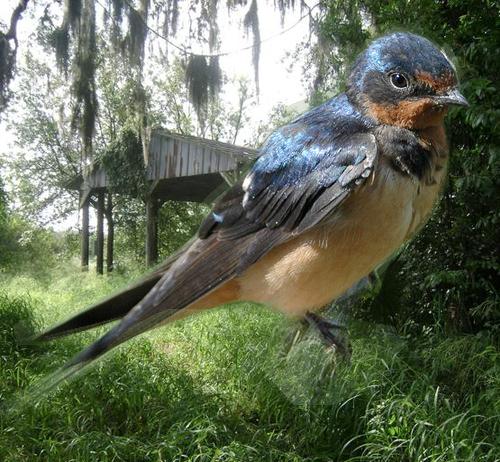} 
& \includegraphics[width=2cm, height=2cm]{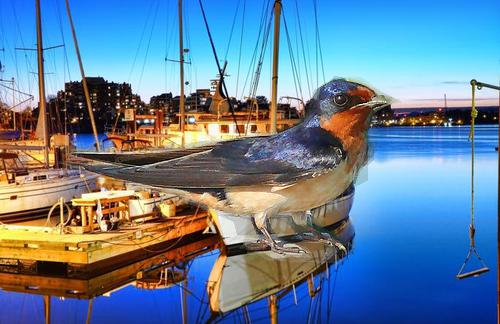} 
& \includegraphics[width=2cm, height=2cm]{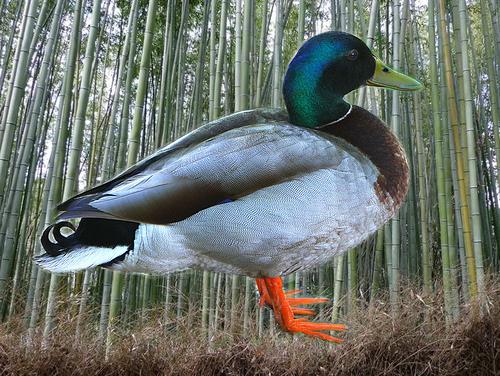} 
& \includegraphics[width=2cm, height=2cm]{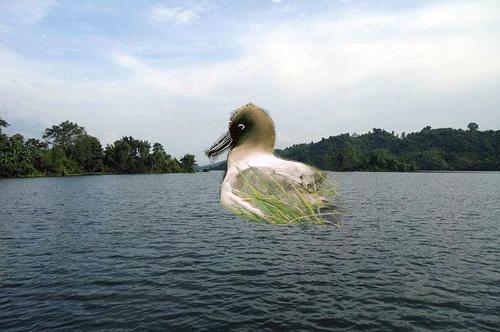} \\
\midrule
ClassLabel      & 0     & 0     & 1     & 1     \\
AttributeLabel  & 0     & 1     & 0     & 1     \\
GroupLabel      & 0     & 1     & 2     & 3     \\
\midrule
\#TrainingData   & 3,498 & 184   & 56    & 1,057 \\
\#ValidationData & 467   & 466   & 133   & 133   \\
\#TestData       & 2,255 & 2,255 & 642   & 642   \\
\bottomrule
\end{tabular}
}
\end{table}
Regulatory uncertainty over baseline behaviour is modelled via a compact credal set consisting of ERM-trained ResNet-50~\citep{he2016deep} model, mixed with uniform noise to form a credal set of distribution not obedient to regulation 
\begin{equation*}
    \mathcal{P}_0=\{P\quad|\quad \epsilon P_{ERM}+(1-\epsilon)P_{uniform}\quad \epsilon\in[0,1]\}
\end{equation*}
Where $P_{uniform}$ is just randomised prediction for $P(Y|X=x)$, i.e. randomly says if a bird is waterbird or land bird. Effectively, $\mathcal{P}_0$ represents the mixture of distributions which rely on the spurious features, background information in the case of ERM and and random noise in the case of $P_{uniform}$.

\subsection{Details of the Fairness Regulation Experiment} 
\label{appendix:experiment-details-fairness}

We now discuss the implementation of the bets for fairness regulation. We consider paired data for both subgroups from the distributions $Y_0\sim \text{Bernoulli}(0.1)$ and $Y_1\sim \text{Bernoulli}(\Gamma+0.1)$ for $\Gamma\in\{0.3,0.6\}$. We now show that our test statistic $\pi_n = \prod_{t=1}^n (1 + \lambda_t (\tau-|Y_0-Y_1| ))$ is a test super-martingale for the implicit credal set defined by our requirements. Then from Proposition~\ref{proposition:matringale-implementable} we can argue that the regulation mechanism $\Pi:=\{\pi[\lambda]\}_{\lambda\in\Lambda}$ will be obedient to regulation. Let us denote the implicit credal set as
\[
\mathcal{P}_0 := \{P \mid \Gamma \geq \tau\}.
\]
We show that the wealth process
\[
\pi_n = \pi_{n-1}\Big(1 + \lambda_n(\tau - |Y_0 - Y_1|)\Big), \qquad \pi_0 = C,
\]
is a non-negative supermartingale under every $P \in \widetilde{\mathcal{P}}_0 := \{P \mid \mathbb{E}_P|Y_0 - Y_1| \geq \tau\}$. By Proposition~\ref{proposition:matringale-implementable}, this implies that the regulation mechanism $\Pi := \{\pi[\lambda]\}_{\lambda \in \Lambda}$ is obedient on $\widetilde{\mathcal{P}}_0$. Let $\mathcal{F}_n = \sigma(\{Y_{0}^i\}_{i=1}^n, \{Y_{1}^i\}_{i=1}^n)$. For any $P \in \widetilde{\mathcal{P}}_0$,
\begin{align*}
    \mathbb{E}_P[\pi_n \mid \mathcal{F}_{n-1}]
    &= \mathbb{E}_P\!\left[\pi_{n-1}\big(1 + \lambda_n(\tau - |Y_{0} - Y_{1}|)\big) \,\Big|\, \mathcal{F}_{n-1}\right] \\
    &= \pi_{n-1}\Big(1 + \lambda_n\big(\tau - \mathbb{E}_P[|Y_{0} - Y_{1}| \mid \mathcal{F}_{n-1}]\big)\Big)
       \tag*{(\mbox{$\pi_{n-1}, \lambda_n$} are \mbox{$\mathcal{F}_{n-1}$}-measurable)} \\
    &= \pi_{n-1}\Big(1 + \lambda_n\big(\tau - \mathbb{E}_P|Y_0 - Y_1|\big)\Big)
       \tag*{(i.i.d.\ sampling)} \\
    &\leq \pi_{n-1},
\end{align*}
where the last step uses $\lambda_n \geq 0$ and $\mathbb{E}_P|Y_0 - Y_1| \geq \tau$ (since $P \in \widetilde{\mathcal{P}}_0$). Hence $\pi_n$ is a non-negative supermartingale under every $P \in \widetilde{\mathcal{P}}_0$, so by Proposition~\ref{proposition:matringale-implementable}, $\Pi$ is obedient on $\widetilde{\mathcal{P}}_0$.

\paragraph{Relationship between $\widetilde{\mathcal{P}}_0$ and $\mathcal{P}_0$.}
By Jensen's inequality, $\mathbb{E}_P|Y_0 - Y_1| \geq |\mathbb{E}_P[Y_0] - \mathbb{E}_P[Y_1]| = \Gamma$. Therefore $\{P \mid \Gamma \geq \tau\} \subseteq \{P \mid \mathbb{E}_P|Y_0 - Y_1| \geq \tau\}$, i.e., $\mathcal{P}_0 \subseteq \widetilde{\mathcal{P}}_0$. The mechanism is thus obedient on the (possibly strict) superset $\widetilde{\mathcal{P}}_0 \supseteq \mathcal{P}_0$, which preserves obedience on $\mathcal{P}_0$ but may sacrifice feasibility for borderline-compliant providers: those whose true fairness gap satisfies $\Gamma < \tau$ but for whom $\mathbb{E}_P|Y_0 - Y_1| \geq \tau$ are treated as non-compliant. This is the source of the self-exclusion observed at $\Gamma = 0.5$.

\section{Challenges in AI Regulations Beyond Statistical Issues}
\label{appendix-non-technical-challenges-for-regulation}  

Statistical or technical challenges set aside, AI regulation has several non technical challenges compared to classic regulations in the past as there are seldom any goods or process that are as general as ``intelligence'' and have such close human interaction. One key issue is that the liability of AI model's risk is fragmented across model designers, data suppliers, integrators, and deployers, complicating enforcement \citep{tabassi2023artificial, bertolini2021expert}.  
Another aspect is of Jurisdictional fragmentation and cross-border deployment, which undermine coherent remedies and legal actions on designers or other stake holders\citep{edwards2021eu,uk_frontier_ai_2023}.  
There are no widely adopted technical standards or certification regimes; proprietary intellectual-property and trade secrets conflict with transparency and auditability \citep{raji2020closing}.   
Supply-chain opacity in data provenance, labelling, and collection prevents reliable forensics also offer some additional challenges\citep{bender2021dangers}.  
Market concentration of some large scale service providers also known as ``big-tech'' in compute and data creates political-economy pressures and regulatory capture \citep{lohn2022much,korinek2025concentrating}. 
Dual-use capabilities, adversarial gaming, and benchmark overfitting lets actors satisfy narrow tests while retaining harmful capacity \citep{blum2015ladder,mazeika2024harmbench,hardt2025emerging}. These threats are further exacerbated by test time adaptation of AI models and lack of strong defences for these cases~\citep{wu2023uncovering,singh2023robust}.
Often evidence standards in courts and agencies are immature for probabilistic, high-dimensional technical proofs \citep{kroll2015accountable}.  
Certification and continuous audit impose high fixed costs that raise market-entry barriers \citep{raji2020closing}.  
Human-in-the-loop requirements are hard to specify and brittle in practice \citep{amodei2016concrete}.  
Finally, cultural and ethical pluralism, privacy trade-offs in monitoring, and systemic risks from correlated deployments mean regulation must reconcile competing values under uncertainty \citep{unesco2022recommendation} and require new paradigms where such pluralism is baked in~\citep{singh2024domain}.  
These legal, economic, organizational, and security frictions interact with the information asymmetry and statistical uncertainty, to make AI regulation both harder to formulate and easier for stakeholders to evade than conventional regulations \citep{brundage2018malicious}.

\section{Broader Impact}
\label{app:broader-impact}
This work develops a theoretical framework for designing AI regulations that account for both information asymmetry between regulators and providers, and the statistical uncertainty inherent in evaluating compliance from finite samples. By characterising when perfect market outcomes are achievable in terms of credal sets, our results provide policymakers with a diagnostic tool to assess whether a proposed regulation can, in principle, be enforced without admitting non-compliant providers and excluding compliant ones. This has several potential positive consequences. First, it offers a principled basis for choosing between candidate regulatory metrics: requirements such as worst-case subgroup accuracy or sub-group fairness, while normatively appealing, induce non-credal sets of non-compliant distributions and are therefore vulnerable to strategic gaming through randomisation. Surfacing this trade-off may help regulators select surrogate metrics that are both enforceable and aligned with their underlying normative goals, or to consciously adopt conservative convex-hull regulations when no credal surrogate is available. Second, our connection to hypothesis testing and testing-by-betting shows that existing statistical machinery can be repurposed to construct implementable mechanisms, lowering the technical barrier for regulatory bodies to operationalise black-box regulations.

We also recognise potential limitations and risks. Our framework assumes rational, expected-utility-maximising providers; in practice, providers may be risk-averse, boundedly rational, or coordinate in ways our single-agent model does not capture, and the incentive guarantees of obedience and feasibility hold only ex ante. Mis-specifying the credal set $\mathcal{P}_0$, whether through poorly chosen surrogates or implicit definitions that fail to track the regulator's actual concern, could produce regulations that are technically PMO-achieving but normatively misaligned, lending false legitimacy to weak compliance standards. There is also a risk that the framework's apparent rigour is invoked to justify regulations whose real bottleneck is political or institutional rather than statistical, including the legal, jurisdictional, and supply-chain frictions discussed in Appendix \ref{appendix-non-technical-challenges-for-regulation}. Finally, by shifting the burden of proof onto providers, mechanisms of the kind we study may raise market-entry costs and reinforce concentration among well-resourced incumbents, an outcome that regulators should weigh against the gains in compliance. We see our contribution as one input into a broader, necessarily interdisciplinary effort, and encourage its use alongside, rather than in place of, qualitative policy analysis and stakeholder engagement.
\end{bibunit}

\end{document}